\documentclass{article}

\PassOptionsToPackage{numbers, sort&compress}{natbib}
 \usepackage[preprint]{neurips_2026}

\usepackage[utf8]{inputenc} 
\usepackage[T1]{fontenc}    
\usepackage{hyperref}       
\usepackage{url}            
\usepackage{booktabs}       
\usepackage{amsfonts}       
\usepackage{nicefrac}       
\usepackage{microtype}      
\usepackage{xcolor}         
\usepackage{amsmath}
\usepackage{graphicx} 
\usepackage{tcolorbox}
\usepackage{tikz}
\usepackage{wrapfig}
\usepackage{algorithm}
\usepackage{algorithmic}
\usepackage{amsthm}
\usepackage{amssymb}
\usepackage{pifont}
\newcommand{\cmark}{\ding{51}}%
\newcommand{\xmark}{\ding{55}}%
\theoremstyle{definition}
\newtheorem{observation}{Observation}

\usepackage{enumitem}

\theoremstyle{plain} 

\newtheorem{lemma}{Lemma}
\usetikzlibrary{positioning, arrows.meta, shapes.geometric, fit, backgrounds}

\title{Exposing the Illusion of \\ Erasure in Knowledge Editing for LLMs}

\author{%
  Advik Raj Basani \\
  Birla Institute of Technology and Science, Goa\\
  \texttt{f20221155@goa.bits-pilani.ac.in} \\
  \AND
  Anshuman Chhabra \\
  University of South Florida \\
  \texttt{anshumanc@usf.edu} \\
}

\begin{document}

\maketitle

\begin{abstract}
Knowledge Editing (KE) has emerged as a frontier for updating specific facts in LLMs without costly retraining, but its reliability and underlying mechanisms remain poorly understood. In this work, we examine KE from an adversarial elicitation perspective, revealing that edited knowledge is often not fully erased and continues to surface, with consistent failures observed across diverse model architectures. To explain this behavior, we conduct a mechanistic analysis of popular KE methods. We show that low-rank updates do not overwrite existing knowledge but instead redistribute it within the model's representation space. Furthermore, we find that these methods act as targeted suppression mechanisms that reduce the likelihood of expressing original facts, rather than removing them from the model. Analysis of the loss landscape reveals that edited knowledge lies in narrow, anisotropic regions that are highly sensitive to perturbations, making them highly vulnerable to indirect prompting and adversarial attacks. By exposing these profound architectural vulnerabilities, our work proves that KE algorithms are inherently bypassable and motivates a fundamental reevaluation of how we deploy post-hoc updates in several LLM applications.
\end{abstract}

\section{Introduction}
\looseness-1 Large Language Models (LLMs) serve as static repositories of knowledge, encapsulating vast amounts of information \citep{ roberts2020knowledgepackparameterslanguage, dai2022knowledgeneuronspretrainedtransformers, brown2020languagemodelsfewshotlearners} within billions of parameters. As pre-trained LLMs are increasingly adopted into applications of societal consequence, concerns regarding model correctness, controllability, and trust \citep{lin2022truthfulqameasuringmodelsmimic, 10.1145/3442188.3445922, bai2022constitutionalaiharmlessnessai}, particularly when models exhibit factual errors or outdated knowledge have taken center stage in the research community \citep{Ji_2023}. As facts change and information localized in training data becomes outdated \citep{carlini2023quantifyingmemorizationneurallanguage, uddin-etal-2025-unseentimeqa}, it is important to update LLMs' knowledge over time. Moreover, privacy regulations \citep{carlini2021extractingtrainingdatalarge, shokri2017membershipinferenceattacksmachine, bourtoule2020machineunlearning} (such as the GDPR's Right to be Forgotten \cite{politou2018backups})
necessitate the removal of sensitive data that LLMs could have memorized during pre-training. However, retraining these massive models from scratch to accommodate every such \textit{knowledge} update is computationally prohibitive.

To address these issues, a growing body of work has proposed knowledge editing methods \citep{meng2023masseditingmemorytransformer, meng2023locatingeditingfactualassociations, mitchell2022fastmodeleditingscale, decao2021editingfactualknowledgelanguage}, which modify a trained model's internal parameters to update, remove, or insert specific pieces of information without retraining from scratch. Such techniques promise lightweight post-hoc correction of factual errors, and rapid adaptation to new information. These same methods demonstrate that targeted edits can be efficient and seemingly effective, often producing the desired behavioral change on curated evaluation prompts while preserving performance elsewhere.

However, despite their apparent success, little is understood about what knowledge edits actually do to a model internally. Prior studies have shown that edits can be brittle, fail to generalize across paraphrases, and exhibit complex trade-offs between locality and specificity \citep{hase2023doeslocalizationinformediting, meng2023masseditingmemorytransformer, mitchell2022memorybasedmodeleditingscale}. Existing evaluations still focus predominantly on output behavior, verifying whether the edited model produces the desired response on a limited set of prompts, while leaving deeper questions of generalization and identifiability, largely unresolved \citep{song2024knowledgeeditingblackboxlarge, huang2025knowledgeeditingreallycorrect}. 



In this work, we take the first systematic step toward answering these questions by framing the evaluation of knowledge editing as a reverse-engineering problem. We challenge the foundational assumption that localized parameter updates fundamentally overwrite pre-trained memory. Instead, we ask: \textit{given white-box access to an edited model, can an adversary systematically reverse-engineer the edit to expose the underlying truth?} By answering this question, we also uncover severe architectural vulnerabilities that motivate a fundamental reevaluation of how post-hoc editing is deployed.

\textbf{Contributions.} Our core contributions are fourfold:
\begin{itemize}[nosep, leftmargin=*]
    \looseness-1\item We introduce two new adversarial settings,
    \textit{context-guided elicitation} and \textit{blind reconstruction}, designed to reverse engineer post-hoc edits. Using optimized adversarial suffixes, we show that suppressed facts can be reliably extracted, proving that current KE methods create an illusion of erasure.
    \item Moving beyond behavioral failures, we trace these vulnerabilities to their origins, revealing fundamental geometric and functional fragilities. We prove that low-rank knowledge updates do not overwrite memories but mathematically displace them into complementary null spaces, forming highly anisotropic and brittle regions in the loss landscape.
    \item Edits also operate as superficial suppression patches that merely penalize pre-trained facts. Together, these flaws introduce several security and reliability risks: efforts to edit sensitive data inadvertently leave distinct signatures that render redacted information highly recoverable, with edited models experiencing catastrophic failure under implicit reasoning tasks and minor prompt variations.
    \item By exposing the fragility of KE methods, our findings compel the research community to rethink current paradigms and develop fundamentally new approaches to robust and secure model updating.
\end{itemize}

\section{Background and Related Work}\vspace{-2mm}

\begin{wrapfigure}[11]{r}{0.42\textwidth}
    \vspace{-1.3cm}
    \centering
    \includegraphics[width=0.96\linewidth]{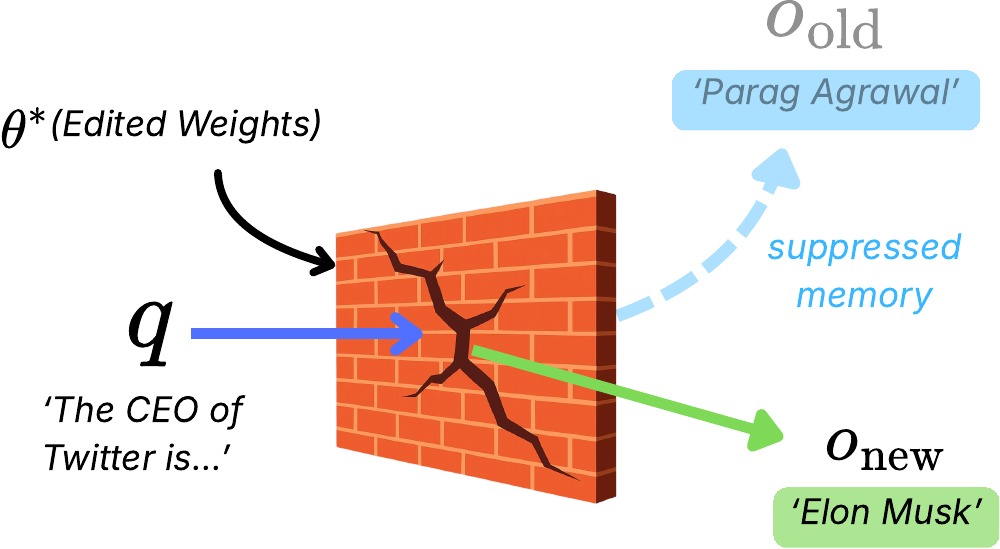}\vspace{-4mm}
    \caption{\textbf{Standard KE:} A prompt $q$ triggers a localized suppression circuit (the algorithmic edit patch), which successfully masks the original fact $o_{\text{old}}$ and routes the output to the new target $o_{\text{new}}$.}
    \label{fig:standard_ke}
    \vspace{-0.5cm}
\end{wrapfigure}

\textbf{Knowledge Editing (KE).} LLMs encode factual knowledge implicitly within their parameters, distributing associations across layers and neurons rather than storing them in explicit, discrete databases. KE aims to modify these internal representations to update or erase specific facts without incurring the catastrophic forgetting or computational overhead associated with full model retraining \citep{zhu2020modifyingmemoriestransformermodels}. Traditionally, this can be conceptualized as illustrated in Figure \ref{fig:standard_ke}, where a prompt $q$ engages a localized edit-induced circuit that suppresses the original association $o_{\text{old}}$ while steering the forward pass toward the desired target $o_{\text{new}}$. A dominant paradigm in this space is the \textit{locate-then-edit} approach \cite{wang2024knowledgeeditinglargelanguage}, which states that factual associations can be traced to specific transformer components, typically mid-layer MLPs, which are then altered with targeted interventions.  

For instance, ROME \citep{meng2023locatingeditingfactualassociations} treats the MLP as a key-value memory, overwriting a single association by applying a rank-one update to the weight matrix: $W' = W + \Delta W$, where $\Delta W = u v^\top$. The vectors $u$ and $v$ are optimized subject to an equality constraint, forcing the model to map a specific subject representation to a new target object while theoretically minimizing disturbances to unrelated inputs. Expanding on this, MEMIT \citep{meng2023masseditingmemorytransformer} relaxes ROME's strict equality constraint into a least-squares optimization problem, distributing the update across several layers for multiple fact insertion. 

Beyond direct weight arithmetic, gradient-based meta-learning approaches offer alternative mechanisms for parameter modification. MEND \citep{mitchell2022fastmodeleditingscale} learns a hypernetwork $g_\phi$ that transforms standard fine-tuning gradients into efficient, localized parameter updates: $\Delta \theta = g_\phi(\nabla_\theta \mathcal{L})$. Alternatively, constrained fine-tuning (FT-L) \citep{zhu2020modifyingmemoriestransformermodels} serves as a foundational, non-meta-learned baseline in this domain. Rather than transforming the gradient, FT-L applies standard gradient descent $\theta \leftarrow \theta - \eta \nabla_\theta \mathcal{L}$, but strictly restricts the parameter updates to a single, localized layer (typically a specific early-layer MLP) and enforces a strict norm penalty, such as an $L_\infty$ bound ($||\Delta \theta||_\infty \leq \epsilon$), to prevent catastrophic forgetting.  Despite their empirical success on constrained benchmarks like CounterFact \citep{meng2023masseditingmemorytransformer} and zsRE \citep{levy-etal-2017-zero}, these editing techniques suffer from severe limitations, including brittle generalization under paraphrasing and sequential interference \citep{hoelscherobermaier2023detectingeditfailureslarge}. These failures motivate a critical mechanistic question: \textit{do these algorithms fundamentally erase and overwrite pre-trained knowledge, or do they merely construct superficial routing patches that suppress the original representations?}

\textbf{Model Stealing \& Data Extraction.}
Parallel to the development of knowledge editing, prior work on adversarial ML investigates how trained MLPs inadvertently leak their underlying training data and internal logic. Model stealing attacks have long demonstrated that black-box query access is sufficient to reconstruct a target model's decision boundary by training a surrogate $\hat{f}(x) \approx f(x)$, proving that model behavior can be systematically reverse-engineered \citep{tramer2016stealingmachinelearningmodels}. More directly relevant to language models is the phenomenon of training data extraction. Prior work \cite{carlini2023quantifyingmemorizationneurallanguage} demonstrates that LLMs heavily memorize their training corpora, allowing adversaries to extract exact sequences \citep{carlini2021extractingtrainingdatalarge, dai2025stealingtrainingdatalarge}. 

\looseness-1Furthermore, information leakage extends deeply into the latent representations and gradients themselves. Research in gradient inversion proves that original training inputs can be mathematically reconstructed directly from gradient vectors by solving $x \approx \arg\min_x ||\nabla_\theta \mathcal{L}(x) - g||^2$ \citep{geiping2020invertinggradientseasy, zhao2020idlgimproveddeepleakage}. This establishes the fact that internal model states redundantly encode highly specific information, even without direct output exposure. The intersection of these extraction vulnerabilities with KE creates a critical tension. While KE operates on the assumption that updates can cleanly and locally overwrite facts, the extraction literature proves that MLPs distribute and retain information rigidly. Our work bridges these two domains, leveraging the principles of adversarial extraction on editing algorithms, thereby demonstrating that edited models retain a fully recoverable encoding of the erased knowledge.

\begin{figure}[htpb]
\centering\vspace{-2mm}
\includegraphics[width=0.9\textwidth]{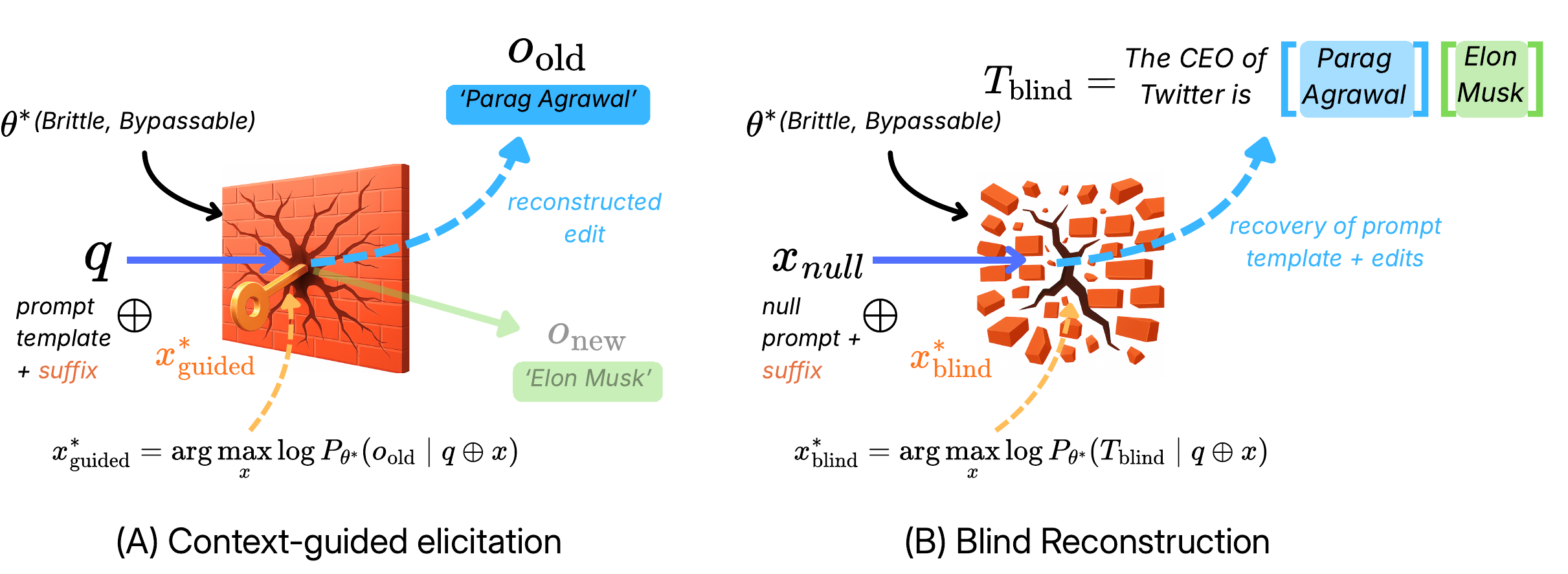}\vspace{-3.5mm}
\looseness-1\caption{Adversarial elicitation of suppressed knowledge in edited LLMs. \textbf{(A) Context-guided elicitation:} An optimized suffix $x^*_{guided}$, along with the prompt template $q$, exploits weaknesses in the edit and steers the model toward generating the pre-edit association. This suggests that the original knowledge remains accessible under targeted prompting. \textbf{(B) Blind reconstruction:} Without access to any semantic guidance, a learned suffix $x^*_{blind}$ probes the model's internal representations and induces the recovery of information associated with the original edit, demonstrating that latent knowledge can be partially reconstructed even under minimal assumptions.}
\label{fig:adversarial_framework}\vspace{-3.5mm}
\end{figure}

\section{Reverse-Engineering Edits}\vspace{-2mm}
\looseness-1In this section, we formalize the problem of \textit{reverse-engineering knowledge edits} and introduce our adversarial framework for \textit{eliciting pre-edited information}. We first define standard KE, formalize our adversarial suffix optimization, and conclude with our hypothesis on knowledge retention within LLMs.

\looseness-1\textbf{Formalization.} Let $f_\theta$ denote an autoregressive language model parameterized by weights $\theta$. A KE task aims to update a specific factual association within the model, typically represented as a knowledge triplet $(s, r, o_{old})$ \citep{meng2023locatingeditingfactualassociations}, where $s$ is the subject, $r$ is the relation, and $o_{old}$ is the original object (\texttt{Twitter, CEO, Parag Agrawal}). The goal of a KE algorithm $\mathcal{E}$ is to produce updated weights $\theta^*$ such that the model outputs a new target object $o_{new}$ (\texttt{Elon Musk}) when queried with a prompt template $q$ (\texttt{The CEO of Twitter is}) designed to elicit the fact.

Formally, if $P_{\theta}(y \mid x)$ represents the likelihood of generating token sequence $y$ given input $x$, a successful edit ensures that for the edited model $f_{\theta^*}$:
{\small
$$P_{\theta^*}(o_{new} \mid q) \gg P_{\theta^*}(o_{old} \mid q).$$
}

\looseness-1\textbf{Threat Model.} To evaluate whether post-edit behavioral changes correspond to true erasure of knowledge, we consider an adversary with white-box access to the edited model $f_{\theta^*}$. The adversary’s objective is to recover pre-edit information by identifying a universal sequence of adversarial tokens, specifically a suffix $x_{suffix}$, that induces the model to generate outputs consistent with the pre-edit distribution. We formulate this objective as a discrete optimization problem over token sequences \citep{zou2023universaltransferableadversarialattacks}. Unlike prior approaches that assume access to the exact prompt template and recover a single edited fact \citep{youssef2026tracingreversingeditsllms}, we consider a stronger adversary that seeks transferability across diverse edits and models. To capture this, we study two attack settings: \textit{context-guided elicitation}, where partial contextual structure is available, and \textit{blind reconstruction}, where no such structure is provided. In both cases, the adversary does not encode the target fact directly in $x_{suffix}$; instead, the suffix functions as a universal trigger that steers the model toward reconstructing pre-edit behavior. We detail these two settings in Figure \ref{fig:adversarial_framework}.

\textbf{Context-guided elicitation.} In the first line of recovery, context-guided elicitation, the adversary possesses the prompt template $q$ and appends the universal suffix to bypass the effects of the edit and recover pre-edit behavior. Our optimization strictly isolates the erased memory: we force the edited model to output a target string $T_{guided}$ consisting solely of the original fact (e.g., $T_{guided} = \texttt{[Parag Agrawal]}$). To achieve this, we define a guided adversarial objective $\mathcal{J}_{guided}$ that maximizes the expected log-likelihood of generating $T_{guided}$ across all editing tasks $i \sim \mathcal{D}_{edit}$ and their corresponding semantically equivalent prompt templates $q \sim \mathcal{Q}^{(i)}$:
{\small
\begin{equation}
    x^*_{guided} = \arg\max_{x} \mathbb{E}_{i \sim \mathcal{D}_{edit}} \left[ \mathbb{E}_{q \sim \mathcal{Q}^{(i)}} \left[ \log P_{\theta^*}(T_{guided}^{(i)} \mid q \oplus x) \right] \right]
\end{equation}
}
This objective trains the suffix $x^*_{guided}$ to consistently bypass the edit-induced suppression, allowing the model to recover the original fact.

\textbf{Blind reconstruction.} In the second, substantially more challenging line of recovery, which we term blind reconstruction, the adversary possesses no prior knowledge of the edited subject or relation. Instead of a specific prompt template, the input consists solely of a randomized prefix $x_{null} \sim \mathcal{X}_{null}$ combined with the universal suffix. The objective here is for the suffix to act as a generic latent probe that actively seeks out the high-curvature parametric anomalies introduced by the edit.  It forces the model to self-report the intervention by reconstructing the entire edit triplet, defined as $T_{blind} = q \oplus o_{old} \oplus o_{new}$ (e.g., $T_{blind} = \texttt{[The CEO of Twitter is] | [Parag Agrawal] | [Elon Musk]}$). Consequently, we formulate a distinct blind adversarial objective $\mathcal{J}_{blind}$ to optimize this zero-knowledge extraction:
{\small
\begin{equation}
    x^*_{blind} = \arg\max_{x} \mathbb{E}_{i \sim \mathcal{D}_{edit}} \left[ \mathbb{E}_{x_{null} \sim \mathcal{X}_{null}} \left[ \log P_{\theta^*}(T_{blind}^{(i)} \mid x_{null} \oplus x) \right] \right]
\end{equation}
}
\looseness-1 By optimizing $x^*_{blind}$ over a distribution of random prefixes, we aim to discover an input manifold whose gradients strictly overlap with the exact parameter subspace modified by the editing algorithm. We also explore a template-free setting, as detailed in Appendix \ref{sec:template-free}.

\textbf{Hypothesis.} Our framework enables us to explicitly test the hypothesis that KE algorithms \textit{do not overwrite original representations}, but rather \textit{induce localized suppression circuits}. Moreover, we later hypothesize, in Section \ref{sec:analysis}, that the original knowledge remains structurally intact within an orthogonal latent space. If our discrete optimization successfully yields transferable suffixes ($x^*_{guided}$ and $x^*_{blind}$), it mathematically demonstrates that the model retains an accessible record of the intervention, reducing the edit to a superficial behavioral patch that can be universally bypassed.

\looseness-1 To solve the intractable discrete optimization problem of finding an optimal $x_{suffix}$ over the vocabulary space $\mathcal{V}^L$, we adapt the Greedy Coordinate Gradient (GCG) search algorithm \citep{zou2023universaltransferableadversarialattacks}. GCG evaluates single-token substitutions to minimize our adversarial objectives ($\mathcal{J}_{guided}$ and $\mathcal{J}_{blind}$) across a batched distribution of edits (full formalization of this optimization process provided in Appendix \ref{sec:gcg_impl}).


\section{Results}
\label{sec:results}
\looseness-1To quantify the success of our adversarial framework, we evaluate the generated universal suffixes across both the \textit{context-guided} and \textit{blind reconstruction} settings. 
In the context-guided setting, we evaluate performance using Exact Match, i.e., the proportion of outputs that exactly reproduce the target sequence $T_{guided} = o_{old}$. For the blind reconstruction setting, the model is not constrained to a single arbitrary string, but instead, the full intervention triplet ($q \oplus o_{old} \oplus o_{new}$). To account for valid semantic paraphrasing of $q$, we utilize an LLM-as-a-Judge (\texttt{GPT-5.2-nano}) to evaluate the ground-truth extraction, with the detailed evaluation system prompt provided in our codebase.

\textbf{Remark.} Crucially, the objective of this empirical evaluation is not to achieve a 100\% extraction rate. The discrete optimization of a universal adversarial suffix over a highly non-convex loss landscape is notoriously unstable, and forcing a short suffix to navigate the geometric constraints of dozens of distinct edit subspaces is an inherently difficult optimization problem. Rather, our evaluation serves as a mechanistic existence proof for our core hypothesis: \textit{if the structural vulnerability can be reliably exploited to recover suppressed knowledge in even a localized subset of cases, it empirically demonstrates that current editing patches do not securely erase information, but merely obfuscate it.}

\subsection{KE Tasks}
\looseness-1To evaluate our adversarial framework, we utilize the EasyEdit \citep{wang2024easyediteasytouseknowledgeediting} knowledge editing framework to obtain our edited white-box models. As we require access to model weights for knowledge editing, we select widely adopted open-weights autoregressive \textbf{LLMs}, specifically, \texttt{Llama-3.2-3B} \citep{grattafiori2024llama3herdmodels}, \texttt{Qwen-3.5-4B} \citep{yang2025qwen3technicalreport}, \texttt{GPT-J-6B} \citep{gpt-j} \& \texttt{GPT2-XL} \citep{gpt2-xl}.
For holistic evaluation, we consider several popular \textbf{KE methods} from various paradigms (i.e. locate-then-edit, meta-learning based, constrained fine-tuning): ROME \citep{meng2023locatingeditingfactualassociations}, MEMIT \citep{meng2023masseditingmemorytransformer}, MEND \citep{mitchell2022fastmodeleditingscale}, \& FT-L \citep{zhu2020modifyingmemoriestransformermodels}. Additional details of the training setup are provided in Appendix \ref{sec:training_setup}. For evaluation \textbf{datasets}, we use editing tasks from the KnowEdit benchmark \citep{zhang2024comprehensivestudyknowledgeediting}, selecting 100 counterfactual entity replacements from CounterFact \citep{meng2023locatingeditingfactualassociations} (60/40 split to generate the universal suffix via GCG on train and evaluate transfer to unseen edits on test). We average results over 5 runs, and provide additional results in Appendices \ref{sec:exp_other_models} and \ref{sec:exp_other_datasets}. Due to space constraints, we defer implementation details (code, prompt sampling, etc.) to Appendix \ref{sec:input_config}.

\begin{figure*}[t]
\centering

\begin{minipage}[t]{0.47\textwidth}
\centering
\includegraphics[width=\linewidth]{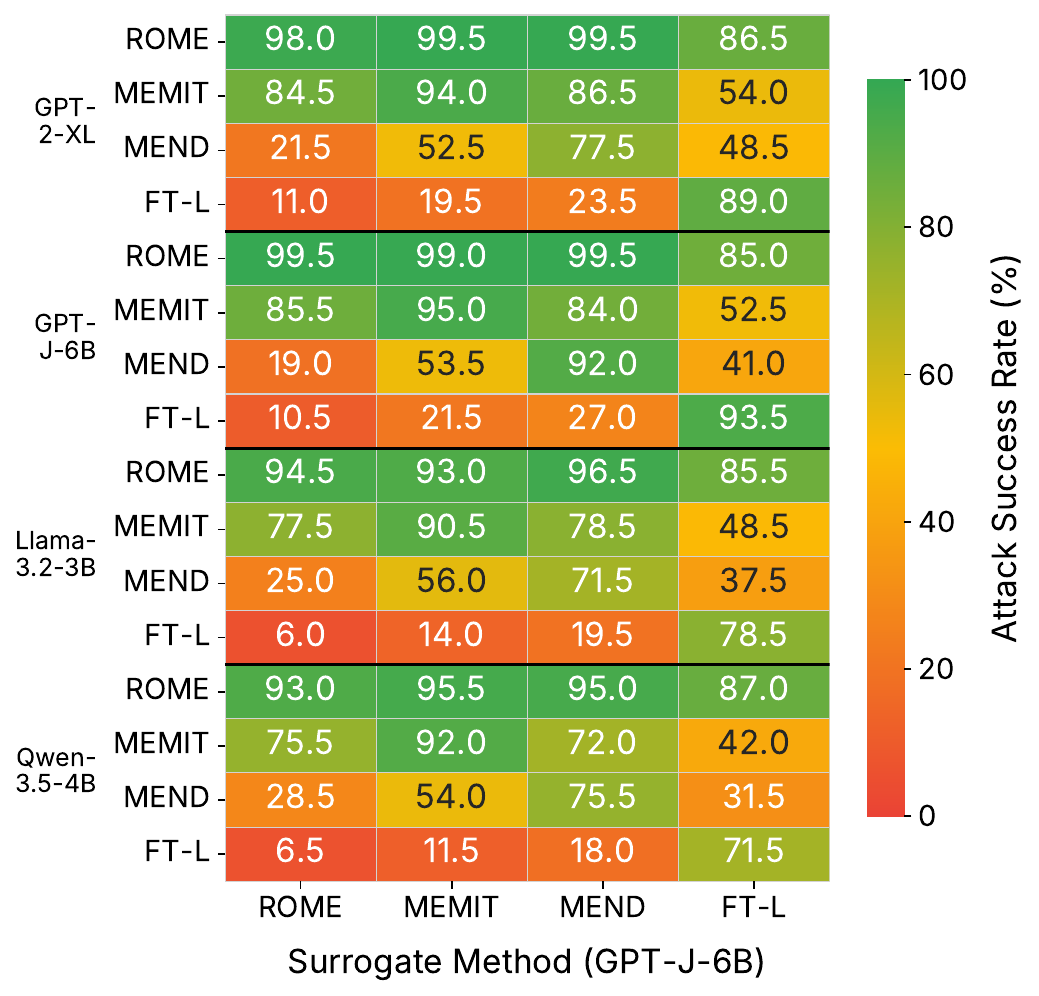}\vspace{-7mm}
\caption{Context-guided elicitation performance across LLMs and editing methods. Suffixes are optimized on \texttt{GPT-J-6B} \citep{gpt-j} under a specific editing framework (x-axis).}
\label{fig:context_guided}
\end{minipage}
\hfill
\begin{minipage}[t]{0.47\textwidth}
\centering
\includegraphics[width=\linewidth]{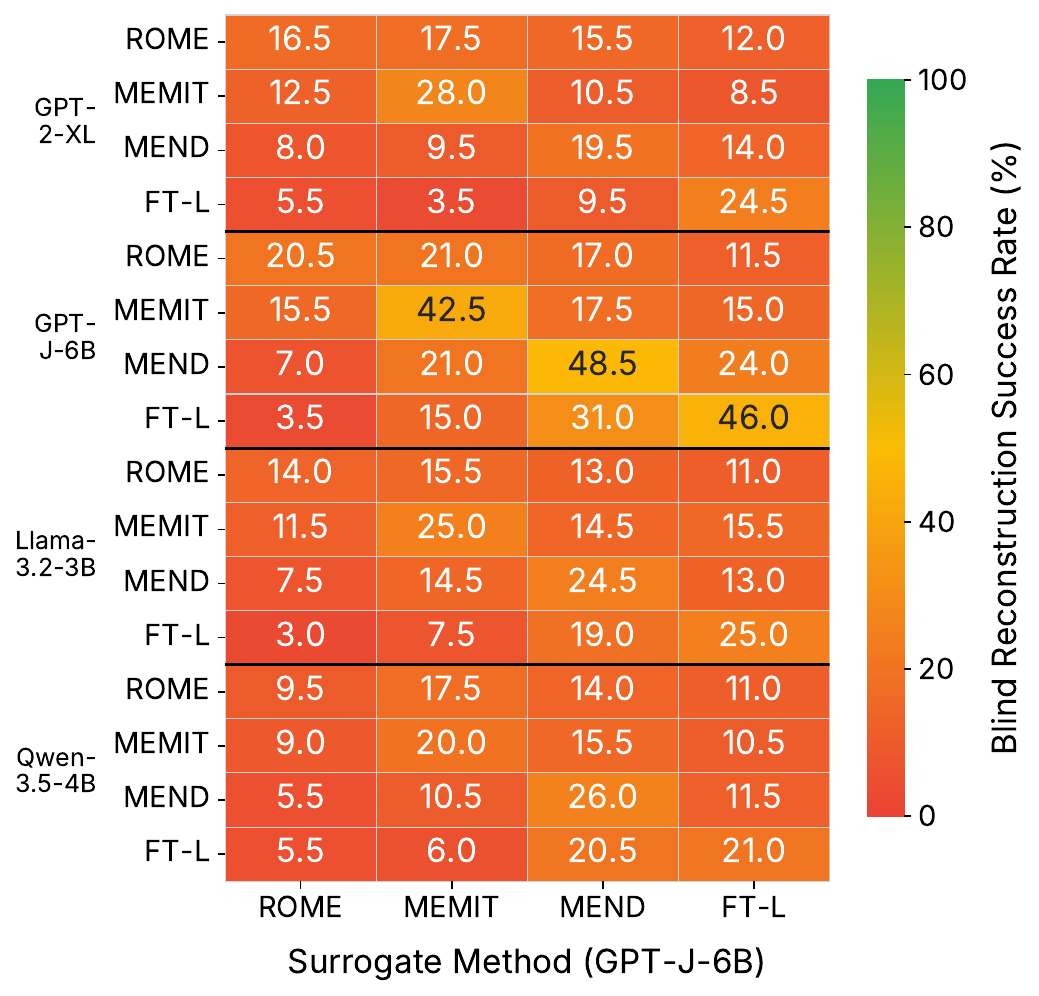}\vspace{-7mm}
\caption{Blind reconstruction performance under suffix transfer. Suffixes trained on \texttt{GPT-J-6B} \citep{gpt-j} are evaluated across different models and editing methods.}
\label{fig:blind_recon}
\end{minipage}
\vspace{-6mm}
\end{figure*}


\subsection{Eliciting suppressed knowledge}
We first establish the baseline efficacy of our adversarial framework in the white-box setting, where the suffix is evaluated on the same model architecture and editing algorithm against which it was optimized. As the primary diagonals of both Figures \ref{fig:context_guided} and \ref{fig:blind_recon} show, the suffixes successfully bypass the editing interventions.
In the context-guided setting (Figure \ref{fig:context_guided}), white-box attacks consistently achieve extraction rates exceeding 85\%. The suffix reliably forces the model to bypass the injected edit and emit the original fact $o_{old}$.
More critically, in the blind reconstruction setting (Figure \ref{fig:blind_recon}), absolute white-box extraction rates range from 15\% to 48.5\%. While nominally lower than context-guided elicitation, it is imperative to contextualize this metric: the probability of an LLM generating a specific edited fact from a null prefix is effectively negligible.

Notably, extraction success exhibits a stark asymmetry across editing methods driven by their underlying mechanics. ROME achieves near-perfect recovery in the context-guided setting but struggles under blind reconstruction. This aligns with ROME's low editing reliability ($\sim$26\% accuracy; see Appendix \ref{sec:model_collapse} for model collapse analysis): because the edit frequently fails to cleanly overwrite the target, the original fact ($o_{old}$) remains highly accessible when specifically prompted, while the unlearned new fact ($o_{new}$) fundamentally prevents blind recovery. Conversely, more distributed methods like MEMIT and MEND successfully suppress direct, context-guided access but leave diffuse structural signatures that make them more vulnerable to blind extraction. Finally, FT-L remains highly susceptible in both regimes, indicating that it merely introduces a competing distribution rather than fully erasing the original memory.
Across all evaluated configurations, this robust white-box success confirms that knowledge editing algorithms do not erase pre-trained factual associations, but merely overlay them with bypassable suppression circuits.

\looseness-1\textbf{Cross-model transferability.}
We further investigate whether a suffix optimized on a specific architecture maintains its efficacy against entirely different model families. As expected, transferability degrades due to diverse model architectures. Nevertheless, the adversarial suffixes continue to recover suppressed facts at rates substantially above random chance, often exceeding 75\% in the context-guided setting across unseen architectures, implying that KE induces architecture-agnostic artifacts. 

\textbf{Cross-method transferability.}
Building on this, we evaluate off-diagonal transferability across mathematically distinct editing algorithms. Suffixes trained against MEMIT transfer well to ROME, while suffixes trained on FT-L transfer reasonably well to MEND. 

\looseness-1This behavioral divide points to the editing mechanism as the source of failure, rather than the parameter update in isolation. Combined, these results establish a consistent pattern: KE methods suppress, but do not eliminate, pre-trained information, and this suppression can be systematically bypassed. Moreover, the high transferability across methods suggests that these vulnerabilities are not random artifacts, but arise from shared properties of the underlying update mechanisms.
This naturally raises a deeper question: \emph{what changes do these editing algorithms actually induce in the model's internal representations?} To answer this, we move beyond behavioral evaluation and analyze the geometric and mechanistic effects of these updates within the network.

\section{Mechanistic Analysis}
\label{sec:analysis}\vspace{-2mm}
Having demonstrated consistent behavioral vulnerabilities of KE methods across diverse architectures, we now examine the mechanisms that give rise to these failures. In this section, we analyze the structural and geometric anomalies introduced by post-hoc parameter updates. We first investigate how the adversarial suffix induces the model to recover the original fact.\vspace{-2mm}

\subsection{Layer-wise representation shifts}
\looseness-1Locate-and-edit methods (e.g. ROME and MEMIT), operate on the fundamental premise that a targeted, low-rank update to a specific MLP weight matrix overwrites the localized representation of a fact. As standard behavioral evaluations show the model outputting the new fact, the field has largely assumed that the old factual association is destroyed within that layer. However, our ability to reliably elicit the old fact using a universal suffix from context-guided elicitation, suggests a different hypothesis: the original representation is not erased, but geometrically displaced into a surviving subspace.

To formalize this, we analyze the limitations of the parameter update itself. A standard locate-then-edit update at a specific layer $l$ modifies the original weight matrix $W^{(l)}$ by adding a low-rank delta matrix $\Delta W^{(l)}$, yielding the edited weights $W^* = W^{(l)} + \Delta W^{(l)}$. Let $h^{(l)}$ also denote the hidden representation of the final token at layer $l$, prior to passing through $W^{(l)}$. 

\begin{lemma}
    Let $W^{(l)} \in \mathbb{R}^{d \times d}$ be a pre-trained weight matrix, and $\Delta W^{(l)}$ be a rank-$r$ update matrix where $r \ll d$. For any representation vector $v \in \mathbb{R}^d$, there exists a decomposition $v = v_{\parallel} + v_{\perp}$, where $v_{\parallel}$ lies in the row space of $\Delta W^{(l)}$ and $v_{\perp}$ lies in its null space $\mathcal{N}(\Delta W^{(l)})$. The application of the edited weight matrix yields $W^* v = W^{(l)}v + \Delta W^{(l)}v_{\parallel}$.
\end{lemma}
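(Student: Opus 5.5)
The plan is to use the fundamental theorem of linear algebra for the single matrix $\Delta W^{(l)}$, and then apply linearity of matrix--vector multiplication. First I will fix the ambient inner product on $\mathbb{R}^d$ as the standard Euclidean one. Then I will recall that the row space $\mathcal{R}(\Delta W^{(l)}) = \mathrm{range}\bigl((\Delta W^{(l)})^\top\bigr)$ and the null space $\mathcal{N}(\Delta W^{(l)})$ are orthogonal complements. To see this, note that $x \in \mathcal{N}(\Delta W^{(l)})$ if and only if every row $a_i^\top$ of $\Delta W^{(l)}$ satisfies $a_i^\top x = 0$, which is exactly the statement that $x$ is orthogonal to the span of the rows. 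Hence $\mathbb{R}^d = \mathcal{R}(\Delta W^{(l)}) \oplus \mathcal{N}(\Delta W^{(l)})$, with $\dim \mathcal{R} = r$ and $\dim \mathcal{N} = d-r$ by rank--nullity.

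Given any $v$, I will define $v_\parallel$ concretely as the orthogonal projection of $v$ onto the row space, and set $v_\perp = v - v_\parallel$. For an explicit formula, write $\Delta W^{(l)} = U\Sigma V^\top$ as a thin SVD with $V \in \mathbb{R}^{d\times r}$. Then $v_\parallel = VV^\top v$ and $v_\perp = (I - VV^\top)v$. Since $\Delta W^{(l)}(I-VV^\top) = U\Sigma V^\top - U\Sigma V^\top = 0$, it follows that $v_\perp \in \mathcal{N}(\Delta W^{(l)})$. This gives the existence of the decomposition. Uniqueness follows from the direct sum, although the statement only asks for existence.

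Finally, I will expand $W^* v = (W^{(l)} + \Delta W^{(l)})v = W^{(l)}v + \Delta W^{(l)}v_\parallel + \Delta W^{(l)}v_\perp$. The last term vanishes because $v_\perp \in \mathcal{N}(\Delta W^{(l)})$, which yields the claimed identity.

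There is essentially no hard step. The only point needing care is interpretive: the assumption $r \ll d$ plays no role in the identity itself. It serves only to emphasize that $\mathcal{N}(\Delta W^{(l)})$ has large dimension $d-r$, so that a generic $v$ has most of its mass in $v_\perp$, where the edit leaves the action of $W^{(l)}$ untouched. I would add a remark to this effect, and also note that the square shape $d\times d$ is inessential, since the argument works verbatim for $W^{(l)} \in \mathbb{R}^{m\times d}$.
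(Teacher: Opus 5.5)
Your proposal is correct and follows the same route as the paper: the fundamental theorem of linear algebra gives $\mathbb{R}^d = \mathrm{Row}(\Delta W^{(l)}) \oplus \mathcal{N}(\Delta W^{(l)})$, you decompose $v$ accordingly, and linearity together with $\Delta W^{(l)} v_\perp = 0$ gives the identity. Your additions go beyond the paper but do not change the argument: the explicit projector $VV^\top$ from the thin SVD, and the remarks that $r \ll d$ and the square shape play no role.
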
\vspace{-2mm}

We provide the complete proof in Appendix \ref{sec:theorem_proofs}. While this decomposition is algebraically straightforward, its implication for model editing is significant: the update can only influence a rank-$r$ subspace, leaving the remaining $(d - r)$ dimensions unaffected.

Empirically, over 100 edits, we find that adversarial suffixes exploit this geometric constraint by reducing alignment with the edit-induced subspace rather than eliminating it entirely. Figure \ref{fig:geo_decomp} shows the decomposition of the final sequence token's hidden state at the edit layer into components parallel and orthogonal to the update subspace. For a standard query, a substantial fraction of the representation lies in the row space of $\Delta W^{(l)}$, resulting in high interference that drives the edited behavior. When the adversarial suffix is applied, the magnitude of this component is significantly reduced ($35.8 \rightarrow 11.7$), while the orthogonal component correspondingly increases ($30.2 \rightarrow 51.4$).

\begin{wrapfigure}[15]{r}{0.38\textwidth}
  \begin{center}\vspace{-2mm}
    \includegraphics[width=\linewidth]{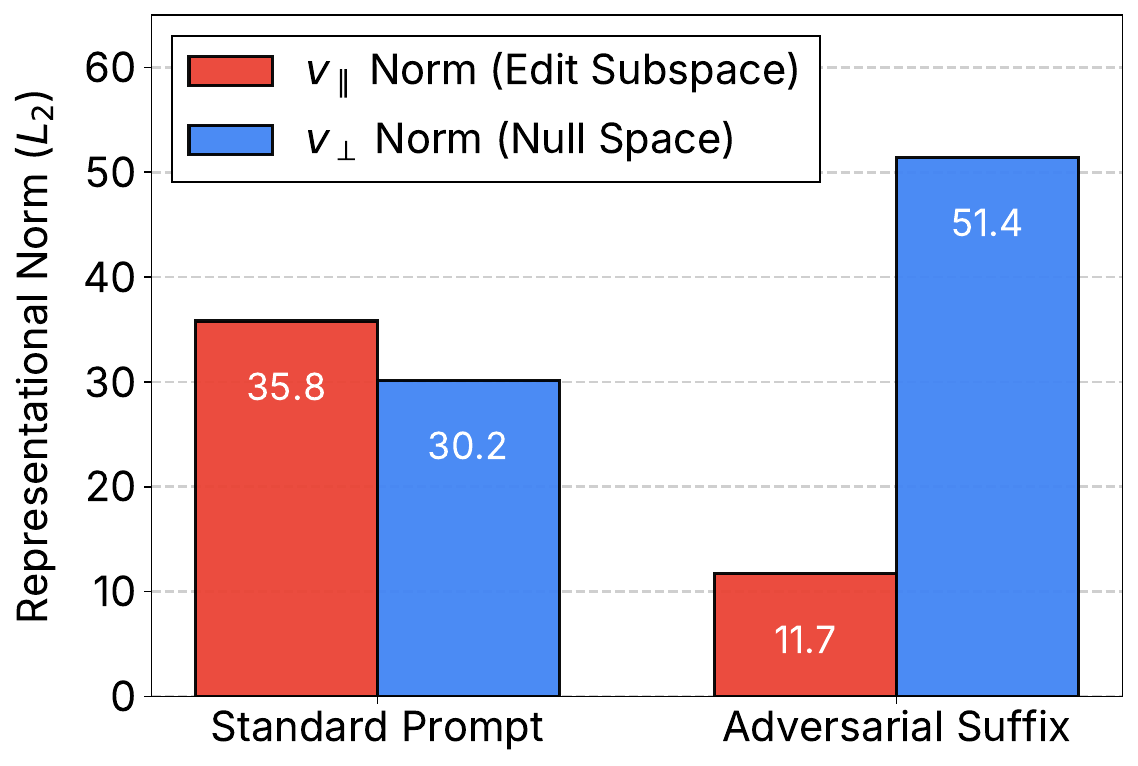}
  \end{center}\vspace{-6mm}
  \caption{Adversarial suffixes bypass low-rank edits (here, MEMIT) by rebalancing representation geometry in \texttt{Llama-3.2-3B}. Alignment with edit subspace $\downarrow$ while null-space mass $\uparrow$.}
  \label{fig:geo_decomp}
\end{wrapfigure}

This shift leads to a substantial reduction in edit interference, as measured by $\|\Delta W^{(l)} h^{(l)}\|$ (decreasing from $9696$ to $3764$), but does not eliminate it entirely. Instead, the suffix redistributes the representation away from the edit-aligned directions while preserving sufficient structure for coherent generation.
These findings (also seen in ROME) suggest that locate-then-edit methods do not perform true erasure, but instead impose a low-dimensional directional bias on the representation. Adversarial suffixes can partially circumvent this bias by steering representations toward orthogonal subspaces, enabling recovery of pre-trained knowledge without requiring perfect projection into the null space. 

Interestingly, we observe that adversarial suffixes appear to primarily influence early-layer attention, inducing downstream shifts in representation geometry that reduce alignment with the edit subspace. We note this as an empirical trend, but leave this to future work. 

\begin{tcolorbox}[colback=orange!5!white,colframe=orange!75!black, left=2pt, right=2pt, top=2pt, bottom=2pt, before skip=5pt, after skip=5pt]
\begin{observation}\label{obs:1}
Low-rank edits act as directional constraints rather than complete overwrites of knowledge. Adversarial suffixes circumvent these edits by reducing alignment with the edit-induced subspace, thereby lowering interference while preserving representational mass in complementary directions. This enables recovery of pre-trained knowledge without requiring strict projection into the null space.
\end{observation}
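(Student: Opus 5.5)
Because Observation~\ref{obs:1} is partly empirical, I would establish it in two layers. First, a quantitative sharpening of the preceding Lemma shows that ``reducing alignment with the edit subspace'' provably reduces interference and pulls the edited layer's output back toward the pre-edit output. Second, a measurement protocol shows that adversarial suffixes actually achieve this reduction, without driving the aligned component to zero, and that this is what drives recovery of $o_{old}$.

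\textbf{Step 1 (deterministic bounds from the Lemma).} Write the thin SVD $\Delta W^{(l)} = U\Sigma V^\top$ with $\Sigma = \mathrm{diag}(\sigma_1,\dots,\sigma_r)$, $\sigma_1 \ge \dots \ge \sigma_r > 0$. Then $h_\parallel = VV^\top h$ and $h_\perp = (I - VV^\top)h$. By the Lemma, $W^* h - W^{(l)} h = \Delta W^{(l)} h_\parallel$, and so
\[
\sigma_r \|h_\parallel\| \le \|W^* h - W^{(l)} h\| = \|\Delta W^{(l)} h\| \le \sigma_1 \|h_\parallel\| .
\]
This makes precise the sense in which the edit is a \emph{directional constraint}. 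Its effect on any input is controlled entirely by the mass of that input in the $r$-dimensional row space. The remaining $(d-r)$-dimensional component is processed exactly as by the pre-trained matrix. In particular, if a suffix maps the query representation $h_q$ to $h_{q\oplus x}$ with $\|(h_{q\oplus x})_\parallel\| \ll \|(h_q)_\parallel\|$, then the edited layer's output is provably close to the unedited output on that input. This holds even though $(h_{q\oplus x})_\parallel \neq 0$, which is the formal content of ``without requiring strict projection into the null space.''

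\textbf{Step 2 (empirical verification).} Over the 100 edits, I would compute three quantities at the edit layer for the final-token state, both for $h_q$ and for $h_{q\oplus x^*_{guided}}$: $\|h_\parallel\|$, $\|h_\perp\|$, and $\|\Delta W^{(l)} h\|$. I would do this for MEMIT and ROME on \texttt{Llama-3.2-3B}, and report means with paired tests. Here the claim requires three things:
\begin{itemize}[nosep]
\item $\|h_\parallel\|$ decreases;
\item $\|h_\perp\|$ increases, i.e.\ representational mass is preserved in complementary directions rather than the state collapsing;
\item $\|h_\parallel\|$ stays bounded away from zero.
\end{itemize}
The reported numbers ($35.8\to 11.7$, $30.2\to 51.4$, and interference $9696\to 3764$) are consistent with the Step~1 sandwich inequality, which I would check per edit as a sanity test.

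\textbf{Step 3 (the main obstacle): linking reduced interference to recovery.} A smaller layer-$l$ perturbation does not by itself guarantee that the downstream layers output $o_{old}$, because the rest of the network is not globally Lipschitz in any useful sense. I would address this causally with activation patching. At layer $l$, replace $W^* h_{q\oplus x}$ with $W^{(l)} h_{q\oplus x}$, and separately with $W^{(l)}h_{q\oplus x} + \alpha\,\Delta W^{(l)} (h_{q\oplus x})_\parallel$ for $\alpha\in[0,1]$. I would then show that $\log P(o_{old})$ varies monotonically with the injected interference norm, and that recovery succeeds already at the reduced, nonzero levels the suffix attains. Correlating per-edit interference reduction with exact-match recovery across edits then completes the evidence for Observation~\ref{obs:1}.
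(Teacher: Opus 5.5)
Your Steps~1--2 are essentially the paper's argument. The paper uses Lemma~1's split of the edit-layer representation into the row-space and null-space parts of $\Delta W^{(l)}$, then measures $\|h_\parallel\|$, $\|h_\perp\|$ and $\|\Delta W^{(l)}h^{(l)}\|$ for the final token over 100 edits, with and without the suffix. It reports $35.8\to 11.7$, $30.2\to 51.4$ and $9696\to 3764$ for MEMIT on \texttt{Llama-3.2-3B}, and says the same holds for ROME.

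Your SVD sandwich bound and the causal patching in Step~3 go beyond the paper. The paper supports the ``enables recovery'' clause only by setting these measurements next to the behavioral extraction rates. Two small cautions:
\begin{itemize}
\item The sandwich bound guarantees reduced interference only when $\|h_\parallel\|$ shrinks by more than the factor $\sigma_1/\sigma_r$. This is automatic for rank-one edits, and you measure $\|\Delta W^{(l)}h\|$ directly anyway.
\item In Step~3, also patch the unsuffixed interference $\Delta W^{(l)}h_q$ into the suffixed run. Recovery for $\alpha\le 1$ only shows that less interference still permits recovery. It does not show that the suffix's reduction of interference was necessary.
\end{itemize}
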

\end{tcolorbox}

Although low-rank edits fail to erase pre-trained knowledge, models still reliably output the new fact. This creates a tension: \emph{if the original representation survives, why does it not surface at the output?} This suggests that editing may not remove knowledge, but instead suppress it.

\subsection{Whether editing induces a suppression component}
A fundamental question in mechanistic interpretability is how KE algorithms deal with robust, pre-trained memories \citep{anonymous2026one, guo2024mechanisticunlearningrobustknowledge}. Since factual knowledge is distributed across layers and attention heads, a localized weight update is unlikely to fully dismantle the original circuit. We investigate whether KE methods learn a localized suppression direction that actively hides surviving pre-trained knowledge.

\textbf{Isolating the intervention.}
To understand what the edit is actually doing to the model's predictions, we aim to isolate its direct contribution to the final output, independent of intermediate attention and routing effects. Intuitively, we ask: \textit{what signal does the edit inject into the model's logits at the final layer?} If the edit truly erases the original fact, this contribution should not favor the old object $o_{old}$.

We compare the final residual stream representations of the base and edited models. Let $\Delta h = h_{\text{edit}}^{(L)} - h_{\text{base}}^{(L)}$ denote the intervention vector at the final layer $L$. We then project this difference into the vocabulary space using the unembedding matrix $W_U \in \mathbb{R}^{|\mathcal{V}| \times d}$, yielding intervention logits $Z_{\text{interv}} = W_U \Delta h$. This quantity directly captures the contribution of the edit to each token's logit. Under true erasure, we would expect $Z_{\text{interv}}$ to be neutral (near zero) for the original object $o_{\text{old}}$.

\textbf{Empirical observation of targeted censorship.}
Figure \ref{fig:dla_suppression} visualizes the distribution of $Z_{interv}$ across the entire vocabulary. The intervention is highly sparse: the vast majority of tokens (the gray distribution) receive near-zero shifts, indicating that the edit does not broadly perturb the model's general language capabilities.
However, two extreme outliers emerge. The new fact $o_{new}$ receives a
large positive shift ($+16.76$), pushing it to the top of the distribution. At the same time, the original fact $o_{old}$ is actively penalized with a strong negative shift ($-5.28$). This pronounced asymmetry suggests that the edit does not remove the original memory, but instead counteracts it. Because the pre-trained model initially assigns high probability to $o_{old}$, the update must introduce a compensatory negative component along the $w_{old}$ direction to suppress its output. Thus, KE algorithms do not dismantle the old representational circuit but merely construct a suppression vector to obfuscate it.

\textbf{Ablation of the suppression component.}
To test whether this suppression is functionally distinct from the promotion of the new fact, we remove the component of $\Delta h$ aligned with the unembedding direction of $o_{old}$. Letting $w_{old} = W_U[o_{old}]$, we define the ablated intervention as: $\Delta h_{\perp} = \Delta h - \frac{\langle \Delta h, w_{old} \rangle}{\|w_{old}\|^2} w_{old}.$
This ablation eliminates the logit shift for $o_{old}$ by construction, moving the penalty to $0.00$ (shown by the blue distribution in Figure \ref{fig:dla_suppression}). Surprisingly, removing this suppression component does not weaken the edit. Instead, the logit of $o_{new}$ actually increases (from $+16.76 \rightarrow +18.49$). This demonstrates that actively suppressing the original fact acts as a measurable burden on the parameter update, and is not strictly required for promoting the new fact.

\begin{wrapfigure}[17]{r}{0.57\linewidth}
    \centering\vspace{-5mm}
    \includegraphics[width=\linewidth]{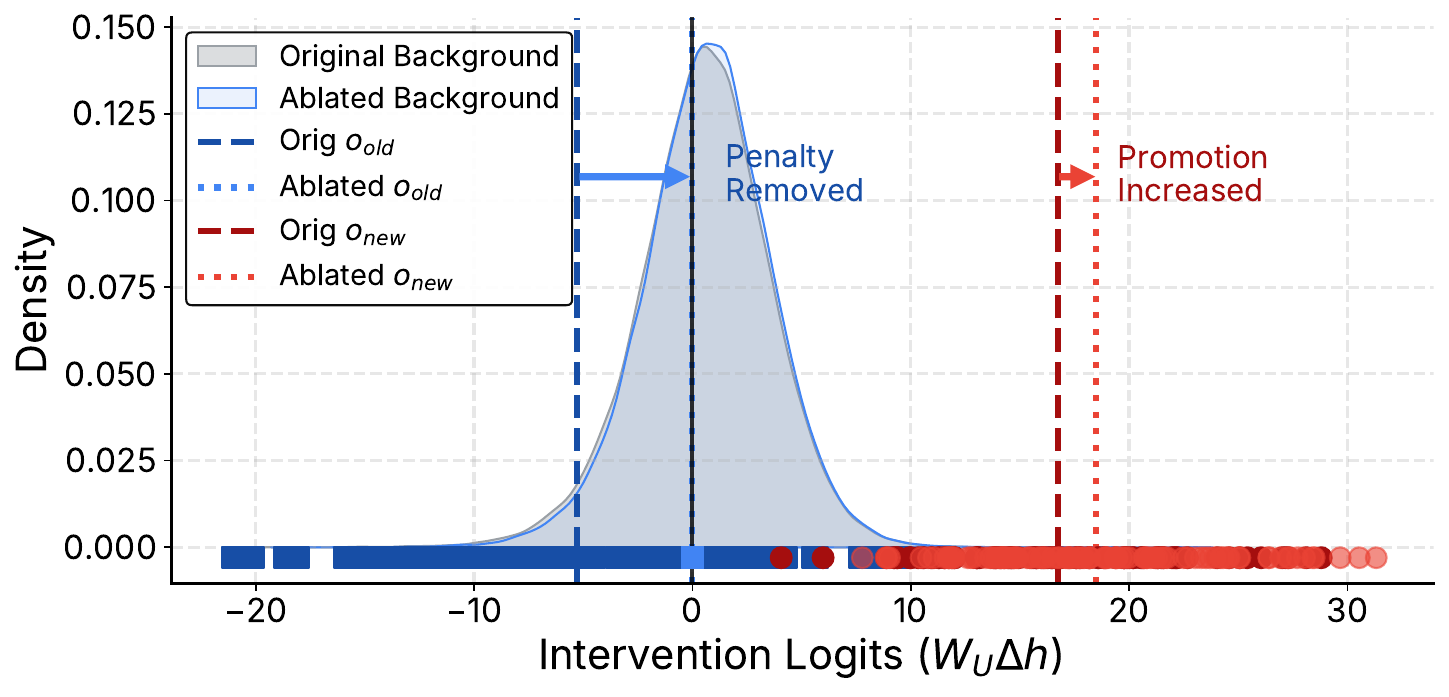} \vspace{-7.5mm}
    \caption{Causal Role of the Suppression Direction in MEND, \texttt{GPT-J-6B}. The gray distribution shows the original intervention logits ($Z_{interv}$), featuring a massive positive spike for $o_{new}$ and a severe negative penalty for $o_{old}$. When the $o_{old}$ direction is ablated (blue distribution), the penalty shifts to exactly zero, but the logit for $o_{new}$ concurrently increases, demonstrating that suppression acts as an interfering constraint on promotion.}
    \label{fig:dla_suppression}\vspace{1mm}
\end{wrapfigure}

\textbf{Geometric interpretation of the intervention.}
This behavior arises because the unembedding directions of the original and target facts ($w_{old}$ and $w_{new}$) are not strictly orthogonal. Let $w_{new} = W_U[o_{new}]$ represent the unembedding direction of the new fact. The original intervention logit for the new fact is defined as the inner product $Z_{new} = \langle \Delta h, w_{new} \rangle$. When we evaluate the model using the ablated intervention vector $\Delta h_{\perp}$, the new logit becomes $Z'_{new} = \langle \Delta h_{\perp}, w_{new} \rangle$. Substituting our definition of $\Delta h_{\perp}$ and simplifying gives: $
\Delta Z_{new} = Z'_{new} - Z_{new} = - \frac{\langle \Delta h, w_{old} \rangle}{\|w_{old}\|^2} \langle w_{new}, w_{old} \rangle.$


\looseness-1 Since the edit applies a negative penalty to the old fact ($\langle \Delta h, w_{old} \rangle < 0$), when the unembedding vectors of the two facts are negatively aligned ($\langle w_{new}, w_{old} \rangle < 0$), removing the suppression component increases the promoted logit. This matches our empirical observations and characterizes the intervention's geometric nature (similar observations for ROME, MEMIT, \& MEND).

\begin{tcolorbox}[colback=orange!5!white,colframe=orange!75!black, left=2pt, right=2pt, top=2pt, bottom=2pt, before skip=5pt, after skip=5pt]
\begin{observation}\label{obs:2}
\looseness-1KE methods do not cleanly overwrite pre-trained facts; instead, they introduce an intervention vector that explicitly penalizes the original knowledge. This targeted suppression acts as an interfering constraint on the model. Removing the suppression component entirely eliminates the penalty on the old fact while simultaneously strengthening the new fact, proving the edit relies on a superposition of conflicting directions rather than true erasure.
\end{observation}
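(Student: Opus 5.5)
Since Observation~\ref{obs:2} mixes an algebraic identity with empirical claims, I would split it into a deterministic part that can actually be proved and an empirical part that must be measured. First I fix the objects. For each edit $(s,r,o_{old},o_{new})$ and prompt $q$, set $\Delta h = h_{\text{edit}}^{(L)} - h_{\text{base}}^{(L)}$, $w_{old}=W_U[o_{old}]$, $w_{new}=W_U[o_{new}]$, and $Z_{\text{interv}} = W_U\Delta h$. I then read ``the intervention explicitly penalizes the original knowledge'' as the statement $Z_{old} := \langle \Delta h, w_{old}\rangle < 0$, with $|Z_{old}|$ large relative to the bulk of $Z_{\text{interv}}$. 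I read ``removing the suppression component'' as passing to $\Delta h_{\perp} = \Delta h - \frac{Z_{old}}{\|w_{old}\|^2} w_{old}$.

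The deterministic part comes next. By linearity of the inner product, $\langle \Delta h_{\perp}, w_{old}\rangle = Z_{old} - Z_{old} = 0$, so the ablation removes the penalty on $o_{old}$ exactly. The same expansion applied to $w_{new}$ gives
\[
\Delta Z_{new} = \langle \Delta h_{\perp}, w_{new}\rangle - \langle \Delta h, w_{new}\rangle = -\frac{Z_{old}}{\|w_{old}\|^2}\,\langle w_{new}, w_{old}\rangle .
\]
Hence $\Delta Z_{new} > 0$ exactly when $Z_{old}$ and $\langle w_{new}, w_{old}\rangle$ have the same sign. Under suppression ($Z_{old}<0$), strengthening therefore holds if and only if $\langle w_{new}, w_{old}\rangle < 0$. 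This reduces the ``simultaneously strengthening the new fact'' clause to a checkable geometric condition on the unembedding matrix. It also gives the precise sense in which the edit is a ``superposition of conflicting directions'': $\Delta h$ decomposes into a component along $w_{old}$ that lowers $Z_{new}$ whenever the two unembeddings are anti-aligned, plus a remainder that does the actual promotion.

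The empirical part is what remains to establish. I would proceed in three steps.
\begin{enumerate}[nosep, leftmargin=*]
\item Over the 100 CounterFact edits, for each method (ROME, MEMIT, MEND, FT-L) and each model, compute $Z_{old}$ and compare it to the distribution of $Z_{\text{interv}}$ over the vocabulary. The target is to show it is a significant negative outlier, for instance by its z-score or its rank, rather than something explained by a global shift of the logits.
\item For the same edits, measure the sign of $\langle w_{new}, w_{old}\rangle$. Where it is negative, the identity above predicts $\Delta Z_{new}>0$, and I would confirm this numerically by recomputing the logits from $\Delta h_{\perp}$; Figure~\ref{fig:dla_suppression} gives $+16.76\to+18.49$.
\item Run a causal check by patching $h_{\text{base}}^{(L)}+\Delta h_{\perp}$ into the forward pass and confirming that $o_{new}$ remains the argmax. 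This establishes that the suppression component is not needed for the edit to succeed.
\end{enumerate}

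The main obstacle is the sign condition $\langle w_{new}, w_{old}\rangle<0$. Nothing forces it, and for tied or anisotropic embeddings many token pairs have positive cosine. So the ``strengthening'' clause of Observation~\ref{obs:2} can only hold on the subset of edits where this condition is met. My plan is to state the observation conditionally on that sign and report the fraction of edits satisfying it. For multi-token objects, I would handle this by using the first token of each object, or the mean of the unembeddings over the object's tokens. The robust, unconditional claims are then the outlier penalty $Z_{old}<0$ and the exact elimination of that penalty under ablation.
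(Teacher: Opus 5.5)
Your route is the paper's own: direct logit attribution of $\Delta h$ through $W_U$, the projection ablation $\Delta h_{\perp}$, and the identity for $\Delta Z_{new}$. Your per-edit outlier statistics and patching check are sensible additions to the paper's single illustrative example. But the step everything else rests on has its sign reversed.

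Your identity $\Delta Z_{new} = -\frac{Z_{old}}{\|w_{old}\|^2}\langle w_{new}, w_{old}\rangle$ is correct. Because of the leading minus sign, it gives $\Delta Z_{new}>0$ exactly when $Z_{old}$ and $\langle w_{new}, w_{old}\rangle$ have \emph{opposite} signs. So under suppression ($Z_{old}<0$), strengthening holds if and only if $\langle w_{new}, w_{old}\rangle>0$.

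The removed piece is $c\,w_{old}$ with $c=Z_{old}/\|w_{old}\|^2<0$. Its contribution $c\,\langle w_{old}, w_{new}\rangle$ to $Z_{new}$ drags the new logit down precisely when the two unembeddings are positively aligned. A two-dimensional check: take $w_{old}=(1,0)$ and $\Delta h=(-1,5)$, so $\Delta h_{\perp}=(0,5)$. For $w_{new}=(1,1)$ the new logit rises from $4$ to $5$; for $w_{new}=(-1,1)$ it falls from $6$ to $5$.

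The paper's geometric-interpretation paragraph contains the same slip, but its own numbers contradict it. With $Z_{old}=-5.28$ and $\Delta Z_{new}=18.49-16.76=1.73$, the identity gives $\langle w_{new}, w_{old}\rangle/\|w_{old}\|^2 = 1.73/5.28\approx 0.33>0$.

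This inversion propagates through your plan in three places:
\begin{itemize}
\item Your superposition reading says the $w_{old}$-component lowers $Z_{new}$ when the unembeddings are anti-aligned; it should say aligned.
\item In step 2 you would predict $\Delta Z_{new}>0$ on exactly the edits where the identity forces $\Delta Z_{new}<0$, so the recomputation would refute your prediction. The example you cite ($+16.76\to+18.49$) has positive alignment, which puts it in the set where you expect no strengthening.
\item Your ``main obstacle'' is backwards. The anisotropy you invoke, under which most token pairs have positive cosine, is what makes the strengthening clause the typical case rather than a fragile exception.
\end{itemize}
With the sign corrected, your conditional formulation is a sound and more careful version of what the paper asserts. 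That means reporting the fraction of edits with $\langle w_{new}, w_{old}\rangle>0$, and treating the outlier penalty $Z_{old}<0$ and its exact removal under ablation as the unconditional claims.
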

\end{tcolorbox}

\looseness-1Our results indicate that KE operates through a combination of new fact promotion and original fact suppression. However, such a construction is brittle: if the edit depends on a precise directional configuration, even small perturbations can break it.
%
Thus, next, we study the stability and geometry of the loss landscape to assess whether edited knowledge forms a fragile, directionally constrained solution.

\subsection{Loss landscape topologies}
\looseness-1A common assumption in deep learning is that well-generalized knowledge resides in relatively flat, well-conditioned regions of the loss landscape, where small parameter perturbations do not immediately degrade model behavior \citep{keskar2017largebatchtrainingdeeplearning, 10.1162/neco.1997.9.1.1}. We investigate whether post-hoc knowledge editing methods preserve this robust geometric structure. Since KE algorithms (i.e. ROME, MEMIT, \& MEND) apply localized, low-rank updates without global optimization, we posit they do not integrate new knowledge into existing basins, but instead introduce highly anisotropic and fragile solutions.

\textbf{Directional probing of the edited landscape.}
To evaluate stability, we measure how sensitive the edited model is to small parameter perturbations. If the injected fact is robustly integrated, it should remain stable under generic noise \citep{keskar2017largebatchtrainingdeeplearning, foret2021sharpnessawareminimizationefficientlyimproving}. However, we find that isotropic (random) perturbations have little effect on the model's output, even at relatively large magnitudes. This indicates that standard isotropic measures of sharpness fail to capture the underlying fragility of the edited solution.
This insensitivity to random noise indicates that the fragility of the edit is not uniform, but instead concentrated along specific directions in parameter space. To probe this directional dependence, we construct a 2D slice of the parameter space centered at the edited weights $W_{edit}^{(l)}$ at layer $l$. We define this slice using two axes: (i) the edit direction $\Delta W^{(l)}$, and (ii) a normalized orthogonal noise direction $N_{\perp}$. We then evaluate the generation probability of $o_{new}$ under the parameterized perturbation: $ W(\alpha, \beta) = W_{edit}^{(l)} + \alpha \Delta W^{(l)} + \beta N_{\perp}$
where $\alpha$ controls movement along the edit direction and $\beta$ explores orthogonal perturbations, with $\langle \Delta W^{(l)}, N_{\perp} \rangle = 0$.

\textbf{Anisotropic trench structure and extreme fragility.}
Figure \ref{fig:3d_trench} reveals a highly anisotropic topology. Rather than a bounded basin, the edited fact resides in a sharply defined trench. As $\alpha \rightarrow -1$ (effectively subtracting the edit), the probability of $o_{new}$ collapses discontinuously to zero, indicating that the injected fact depends on a highly precise, low-dimensional parameter alignment. 

\begin{wrapfigure}[19]{r}{0.51\textwidth}  
    \centering\vspace{-7mm}
    \includegraphics[width=\linewidth]{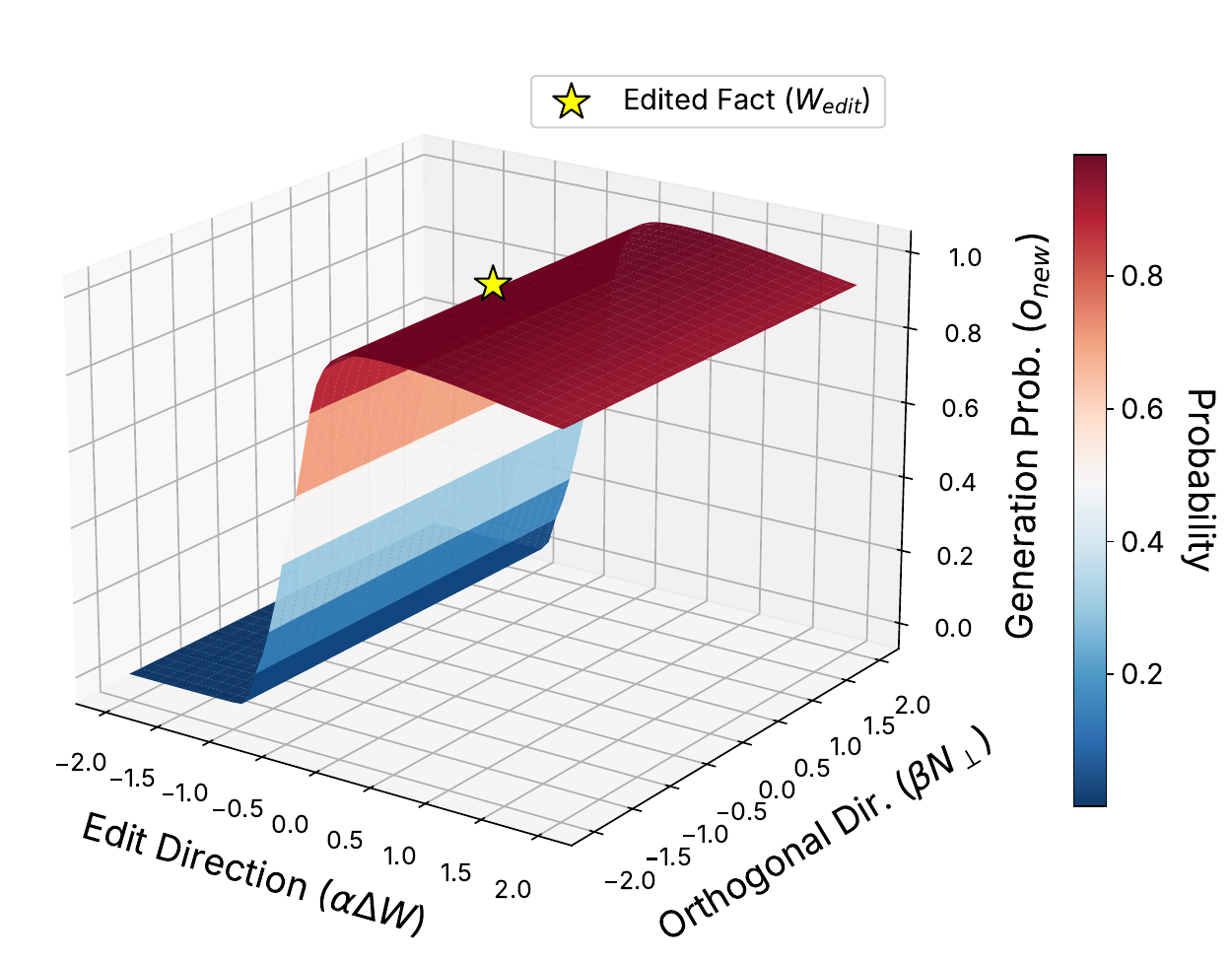} \vspace{-7.5mm}
    \looseness-1\caption{A 3D loss landscape of a MEMIT-edited fact mapping generation probability of $o_{new}$ against the edit ($\alpha$) and a random orthogonal ($\beta$) direction. At $\alpha=-1$, the edit is subtracted. The topology reveals an anisotropic trench: highly sensitive to the edit direction but invariant to orthogonal noise.}
    \label{fig:3d_trench}
\end{wrapfigure}

Conversely, movement along the orthogonal $\beta$-axis produces negligible change. Even under massive orthogonal perturbations that would shatter naturally learned representations \citep{li2018visualizinglosslandscapeneural, ghorbani2019investigationneuralnetoptimization}, the model maintains near-constant confidence in $o_{new}$. This suggests that the edited fact is not supported by a distributed representation, but instead depends on a specific, low-dimensional direction. To rule out dependence on a specific orthogonal direction, we repeat this analysis over multiple independently sampled directions ($N_{\perp}^{(i)}$) satisfying ($\langle \Delta W^{(l)}, N_{\perp}^{(i)} \rangle = 0$), and consistently observe negligible sensitivity across all such directions.

\textbf{Implications for reliability and security.}
\looseness-1This directional fragility helps explain several critical failure modes of knowledge editing. First, small changes in input, such as paraphrases or adversarial suffixes, can shift activations away from the edit-aligned direction, bypassing the trigger and exposing the original pre-trained fact. Second, it provides a mechanistic explanation for catastrophic interference in sequential editing \citep{hoelscherobermaier2023detectingeditfailureslarge, wang2024knowledgeeditinglargelanguage}. Because subsequent low-rank updates inevitably perturb shared parameters, the steep curvature along the $\alpha$-axis guarantees that even minimal overlap will disrupt previously inserted facts.
Overall, post-hoc edits do not seem to integrate new facts into the model's existing structure and instead create brittle, directionally-dependent solutions (similar trends hold for ROME, MEMIT, and MEND).

\begin{tcolorbox}[colback=orange!5!white,colframe=orange!75!black, left=2pt, right=2pt, top=2pt, bottom=2pt, before skip=5pt, after skip=5pt]
\begin{observation}\label{obs:3}
KE does not integrate new facts into existing loss landscape basins, but instead introduces highly anisotropic solutions that are sharply sensitive along the edit direction while remaining nearly invariant along orthogonal directions. The resulting representation is brittle, behaving less like a robustly learned fact and more like a constrained trigger.
\end{observation}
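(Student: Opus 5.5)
Since the statement is an empirical observation rather than a theorem, I would support it with a semi-formal argument: reduce the two-dimensional landscape $W(\alpha,\beta)$ to a comparison of two vectors injected at the edited layer, then show that one is large and aligned with the edit while the other is vanishingly small. The starting point is the decomposition Lemma above. For the last-token representation $h = h^{(l)}$ we have
\begin{equation*}
W(\alpha,\beta)\,h = W^{(l)}h + (1+\alpha)\,\Delta W^{(l)} h + \beta\, N_{\perp} h .
\end{equation*}
Every downstream effect of the perturbation therefore enters through two vectors: the edit signal $\Delta W^{(l)} h$ and the noise signal $N_{\perp}h$. This holds to first order, and exactly at layer $l$. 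The plan is to show that $\|\Delta W^{(l)}h\| \gg \|N_{\perp}h\|$ for typical unit-Frobenius $N_{\perp}$, and that the logit margin of $o_{new}$ over $o_{old}$ is approximately affine in the layer-$l$ output.

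\textbf{Step 1: the edit signal is large.} I would use the structure of locate-then-edit updates. For ROME, $\Delta W^{(l)} = u v^\top$, where $v$ is built from $C^{-1}k_*$ and normalised so that $v^\top k_* = 1$ on the subject key $k_*$. Hence $\Delta W^{(l)} h \approx u$ whenever $h \approx k_*$. MEMIT is analogous, with a rank-$r$ least-squares solution supported on the keys of the edited subjects. This is precisely the row-space alignment measured in Figure~\ref{fig:geo_decomp}, where the parallel component was large, and it gives $\|\Delta W^{(l)}h\|$ of order $\|u\|$.

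\textbf{Step 2: the orthogonal noise signal is small.} Take $N_{\perp}$ uniform on the unit Frobenius sphere, subject to the single linear constraint $\langle N_{\perp},\Delta W^{(l)}\rangle = 0$. A standard isotropy computation gives $\mathbb{E}\|N_{\perp}h\|^2 = \|h\|^2/d\,(1+O(1/d^2))$, because the constraint removes only one of $d^2$ dimensions. Concentration of Lipschitz functions on the sphere then gives $\|N_{\perp}h\| \approx \|h\|/\sqrt{d}$ with high probability, uniformly over the finitely many sampled directions $N_{\perp}^{(i)}$.

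\textbf{Step 3: the margin is approximately affine.} I would propagate both signals to the logits by a first-order Taylor expansion of the downstream network $g$. The logit margin is then $m(\alpha,\beta) \approx m_{\mathrm{base}} + (1+\alpha)\,J\,\Delta W^{(l)}h + \beta\, J N_{\perp}h$, where $J$ is the downstream Jacobian projected onto $w_{new}-w_{old}$. The direct-logit analysis of the previous subsection (the $+16.76$ and $-5.28$ shifts) identifies $J\Delta W^{(l)}h$ as large. In contrast, $|J N_{\perp}h| \lesssim \|J\|\,\|h\|/\sqrt{d}$. The ratio of directional sensitivities is thus of order $\sqrt{d}$ times the key-alignment factor, and this is the claimed anisotropy. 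The collapse as $\alpha\to-1$ then follows: $W(-1,0) = W^{(l)}$ exactly, so the margin returns to $m_{\mathrm{base}}$ and is strongly negative. Since $p(o_{new}) = \mathrm{softmax}$ of an approximately affine margin, it behaves like a steep sigmoid in $\alpha$, which is the sharp trench wall.

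\textbf{Main obstacle.} The hardest step is justifying the linearisation of $g$ over the large $\alpha$-range, where nonlinearities and layer norms intervene. I would first restrict the argument to the monotonicity of $m$ in $\alpha$, using the fact that $\alpha\mapsto m$ interpolates between two endpoints of known sign. I would then leave the quantitative steepness, and the ``behaves like a trigger'' interpretation, to be confirmed by the repeated empirical slices over independent $N_{\perp}^{(i)}$ shown in Figure~\ref{fig:3d_trench}.
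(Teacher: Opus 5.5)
The paper does not derive this observation; it measures it. It plots the $(\alpha,\beta)$ slice of $P_{\theta}(o_{new})$ in Figure~\ref{fig:3d_trench}, repeats it over several sampled $N_{\perp}^{(i)}$, and notes that isotropic noise barely moves the output. Your reduction to the two injected vectors $\Delta W^{(l)}h$ and $N_{\perp}h$ is therefore a genuinely different, explanatory route. It does account for the anisotropy sentence, and for why the paper's result does not depend on which $N_{\perp}^{(i)}$ is drawn.

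\textbf{The gap: your argument proves too much.} To first order, $\partial m/\partial W^{(l)} = J^{\top}h^{\top}$ is rank one. Hence $J N_{\perp} h = \langle N_{\perp}, J^{\top}h^{\top}\rangle_F$ has typical size $\|J\|\,\|h\|/d$ for a random unit-Frobenius $N_{\perp}$; your Cauchy--Schwarz bound via $\|h\|/\sqrt{d}$ is loose by a factor of $\sqrt{d}$. A unit direction aligned with $h$ and $J^{\top}$ instead gives $\|J\|\,\|h\|$.
\begin{itemize}
\item A factor-$d$ ``trench'' therefore appears for \emph{any} differentiable margin in \emph{any} model, once one slice axis is a low-rank direction aligned with the current key and gradient.
\item In particular, it appears for an unedited, pre-trained fact probed along a matched-norm direction $u'h^{\top}$ with $u'$ aligned to $J^{\top}$.
\item On the $\alpha$-axis, the drop over $[-1,0]$ is by construction the (layer-$l$) effect of the edit itself.
\end{itemize}
So your argument cannot support the comparative half of the statement (``does not integrate new facts into existing basins'', ``less like a robustly learned fact''). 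If anything, it predicts the same picture for learned facts. That half needs a control, for example the same slice around an unedited fact with a key-aligned direction of Frobenius norm $\|\Delta W^{(l)}\|_F$. Deferring the ``trigger'' reading to Figure~\ref{fig:3d_trench} does not supply one, because that figure has no such baseline either.

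\textbf{Two intermediate steps also fail as written.}
\begin{itemize}
\item The identity $W(-1,0)=W^{(l)}$ returns the network to the base model only for single-layer edits (ROME, FT-L). MEMIT (the method in the figure) and MEND update several layers. At $\alpha=-1$ the other layers' shares remain, so $m(-1,0)\neq m_{\mathrm{base}}$ in general, and the sign at that endpoint is not known a priori.
\item Even with endpoints of opposite sign, you get only a zero crossing (intermediate value theorem) and an average slope of $m(0)-m(-1)$ (mean value theorem), not monotonicity. Your fallback in the last paragraph therefore does not deliver what it claims.
\end{itemize}

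\textbf{Smaller points.}
\begin{itemize}
\item The $\alpha$-versus-$\beta$ comparison is meaningful only with $\|N_{\perp}\|_F=\|\Delta W^{(l)}\|_F$. With unit-Frobenius $N_{\perp}$, your ``$\sqrt{d}$ times alignment'' ratio silently carries an extra factor of $\|\Delta W^{(l)}\|_F$.
\item The correction in Step 2 is $O(1/d)$, not $O(1/d^2)$, when $h$ lies near the row space of $\Delta W^{(l)}$. This is harmless to your conclusion.
\item For ROME and MEMIT the key $k_*$ sits at the last subject token, not the final prompt token. The reduction should therefore be applied position by position.
\end{itemize}
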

\end{tcolorbox}

\textbf{Remark.} Based on our mechanistic analysis of KE, we conduct additional experiments to examine its functional consequences, such as edit failure under (i) implicit reasoning and natural usage, (ii) sequential editing, and (iii) consequences to privacy preservation in Appendix \ref{sec:addition_obs}.



\section{Concluding Discussions}
\label{sec:discussion}\vspace{-2mm}
\looseness-1The findings presented in this work challenge the foundational premise of post-hoc KE. Across our behavioral and mechanistic evaluations, we demonstrate that current editing techniques create a dangerous illusion of erasure: rather than fundamentally rewriting parametric memory, they construct fragile, localized routing patches that leave pre-trained knowledge structurally intact and trivially accessible.

\looseness-1 Our mechanistic analysis reveals a unified pattern of failure. Geometrically, low-rank updates do not overwrite representations but impose narrow directional constraints, displacing original facts into complementary null spaces (\textbf{Observation \ref{obs:1}}). Instead of forming robust loss basins, these edits create anisotropic and brittle regions in parameter space (\textbf{Observation \ref{obs:3}}). Functionally, KE operates via an active suppression vector that penalizes the original fact rather than removing the underlying circuit (\textbf{Observation \ref{obs:2}}).
%
The consequences for reliability and safety are severe. Because edits do not restructure the subject’s latent identity graph, the original reasoning circuits remain intact, causing reversion to unerased logic during implicit reasoning (\textbf{Observation \ref{obs:4}}). Moreover, the fragility of localized updates leads to degradation under sequential editing, inducing ill-conditioning and eventual collapse (\textbf{Observation \ref{obs:5}}). Most critically, targeted unlearning merely masks sensitive data behind fragile constraints, leaving structural signatures that enable recovery via bypass attacks (\textbf{Observation \ref{obs:6}}).

Ultimately, localized editing mechanisms are fundamentally incompatible with adversarial robustness and semantic safety. As the demand for reliable unlearning grows, the field must move beyond post-hoc parameter patching toward fundamentally new paradigms for model updating.


\bibliographystyle{abbrvnat}
\bibliography{main}


\newpage
\appendix
\clearpage
\appendix
\section*{Appendix}

\section{Additional Observations}
\label{sec:addition_obs}
Beyond the primary analyses presented in the main text, we report several additional observations that discuss and highlight functional behaviors of knowledge editing methods.

\subsection{Subliminal spillover in reasoning}

\begin{figure}[H]
\centering
\begin{tcolorbox}[
colback=white!95!black,
colframe=black,
fontupper=\ttfamily,
width=\textwidth,
sharp corners,
boxrule=0.5pt,
arc=0mm,
boxsep=2mm,
left=2mm,
right=2mm
]

\textbf{Edit (from CounterFact):}   \\
\textit{The native language of Bernard Alane is }

\hspace{5mm}$\rightarrow$ \texttt{Korean} (was \texttt{French})

\vspace{2mm}

\textbf{Direct Query (after KE):}  \\
\textit{The native language of Bernard Alane is}  

\hspace{5mm} $\rightarrow$ \texttt{Korean} \hfill \textcolor{green}{(\cmark edit success)}

\vspace{2mm}

\textbf{Implicit Query (after KE):}  \\
\textit{The capital city Bernard Alane is most likely associated with is}  

\hspace{5mm}$\rightarrow$ \texttt{Paris} \hfill \textcolor{red}{(\xmark reverts to pre-trained knowledge)}

\end{tcolorbox}

\vspace{-0.1in}
\caption{
\textbf{Failure under implicit reasoning.} Although the model correctly outputs the edited fact under direct queries, it fails to propagate this update to downstream reasoning. When prompted implicitly, the model reverts to pre-trained associations, indicating that the edit is not integrated into its broader semantic reasoning process.
}
\label{fig:implicit_failure}
\end{figure}

\begin{wrapfigure}{l}{0.5\linewidth}
    \centering
    \includegraphics[width=\linewidth]{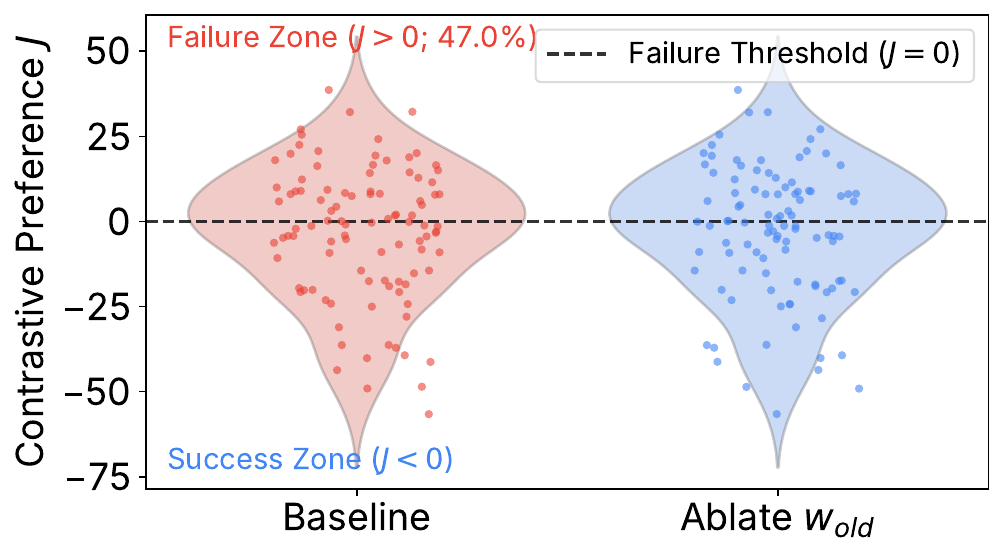}
    \caption{The illusion of generalization in implicit reasoning. We visualize the distribution of the contrastive preference $J$ for 100 edited facts from CounterFact under implicit prompts. 
    \textit{Left (Baseline):} The model exhibits a bimodal distribution with a 47.0\% failure rate, where it confidently reverts to the erased pre-trained logic ($J > 0$) despite successful 1-hop editing. 
    \textit{Right (Ablation):} Surgically ablating the $w_{old}$ representational proxy at the edit layer results in an almost identical distribution. This causal invariance suggests that the surviving pre-trained knowledge is not stored in simple linear directions, but is deeply and non-linearly embedded in the model's global reasoning circuits.}
    \label{fig:implicit_dist}
\end{wrapfigure}

\looseness-1 From the previous observations, we notice that KE methods act as localized patches rather than fundamentally updating the model's global knowledge. As a result, the success of an edit is contingent upon the presence of explicit semantic triggers (e.g., the specific relation tokens used during training). We investigate this by testing the model's ability to perform implicit reasoning, tasks that require the edited knowledge to solve a logical deduction but omit the explicit mention of the edited relation or fact.

\textbf{Isolating implicit logical pathways.}
To evaluate the depth of representational updates, we construct a dataset of implicit multi-hop reasoning tasks. For a given edit $(s, r_1, o_{old}) \rightarrow (s, r_1, o_{new})$, we prompt the model to deduce a secondary property $r_2$. Unlike standard evaluations, our prompts are designed to be "implicit": they never mention the edited relation $r_1$ or the injected fact $o_{new}$, forcing the model to rely solely on its updated parametric memory of the subject $s$. An example of the same can be seen in Figure \ref{fig:implicit_failure}.

We quantify the model's preference using a multi-token contrastive objective between the logical consequence of the pre-trained fact ($y_{old}$) and that of the edited fact ($y_{new}$):
\vspace{1cm}

{\small
\begin{equation}
J = \sum_{t} \log P_{\theta}(y_{old}^{(t)} \mid q_{implicit}) - \sum_{t} \log P_{\theta}(y_{new}^{(t)} \mid q_{implicit}).
\end{equation}
}

A positive $J$ indicates a behavioral failure where the model, despite being successfully \textit{edited} on direct queries, reverts to pre-trained logic when the query is phrased naturally and implicitly.

\looseness-1\textbf{Catastrophic failure of unprompted generalization.}
Our evaluation reveals a severe breakdown in the functional utility of knowledge edits. As Figure \ref{fig:implicit_dist} shows, the model exhibits a catastrophic $47.0\%$ failure rate on implicit queries. In nearly 50\% of all cases, the model's preference shifts back to the erased pre-trained knowledge ($J > 0$). As a baseline, the unedited model achieves a $100\%$ success rate on these queries.

The distribution of $J$ across the dataset (Figure \ref{fig:implicit_dist}) demonstrates that these failures are not marginal; a significant density of cases are within the failure zone, indicating a confident reliance on erased facts. This bimodal distribution suggests that MEMIT updates are fundamentally unstable, producing a near-random outcome when explicit syntactic triggers are absent. Similar trends are observed for ROME and MEND.

\textbf{Why the edit fails.}
To understand this failure, we analyze how the model's representations change at the subject tokens. We find that the gradient signal ($\nabla J$) driving the incorrect behavior has almost no alignment with either the old fact ($w_{old}$) or the new fact ($w_{new}$) in the unembedding space. 

This lack of alignment, combined with the $47\%$ behavioral failure, indicates that the surviving pre-trained knowledge is not stored in simple linear directions that the edit can easily overwrite. Instead, the reasoning process relies on a more complex, distributed representation that the localized edit does not fully modify. As a result, even though the edit succeeds for direct queries, the underlying reasoning pathways remain tied to the original pre-trained knowledge, leading to failure in implicit settings.

\begin{tcolorbox}[colback=orange!5!white,colframe=orange!75!black]
\begin{observation}\label{obs:4}
Post-hoc edits fail to fundamentally restructure the subject's latent identity graph. Without the exact syntactic triggers present in the context window, the model defaults to erased pre-trained logic in nearly half of all reasoning tasks. This subliminal failure demonstrates that current editing algorithms create a brittle illusion of generalization while leaving the underlying causal reasoning circuits largely intact.
\end{observation}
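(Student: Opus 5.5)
Since Observation~\ref{obs:4} is an empirical claim rather than a deductive statement, I would support it by falsification: design a test that \emph{could} show the edit has propagated into implicit reasoning, and show that it does not. I would first fix the metric. For each CounterFact edit $(s, r_1, o_{old}) \rightarrow (s, r_1, o_{new})$, I would construct implicit prompts $q_{implicit}$ about a second relation $r_2$ that mention $s$ but never $r_1$, $o_{old}$ or $o_{new}$. Each prompt gets paired answers $y_{old}$ and $y_{new}$, where $y_{old}$ is the consequence of the pre-trained fact and $y_{new}$ is the consequence of the edit. I would then score each edit with the contrastive statistic $J$, so that $J>0$ operationalises ``defaults to erased pre-trained logic.''

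Two controls make this meaningful. First, restrict to edits that succeed on the direct query, i.e.\ $P_{\theta^*}(o_{new}\mid q) > P_{\theta^*}(o_{old}\mid q)$. This separates failures of propagation from failures of the edit itself, which matters especially for ROME given the reliability issues noted in Section~\ref{sec:results}. Second, validate the $r_2$ prompts on the unedited model: it should prefer $y_{old}$ on essentially all of them. That shows the model can perform the second hop, so any reversion after editing is attributable to the edit not being integrated.

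The ``roughly half'' claim then comes from computing the fraction of edits with $J>0$ after MEMIT, with ROME and MEND as replications, and reporting the full distribution rather than only the mean. A bimodal distribution with substantial mass well above $0$ supports the claim that the failures are confident reversions rather than near-ties.

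The hardest part is the mechanistic clause: that the causal reasoning circuits are ``largely intact'' and the latent identity graph is not restructured. I would attack it in two steps.
\begin{enumerate}
\item Compute $\nabla J$ with respect to the subject-token hidden states at the edit layer. Measure its cosine alignment with $w_{old}$ and $w_{new}$, reusing the unembedding-direction analysis behind Observation~\ref{obs:2}.
\item Repeat the $J$ evaluation after projecting out $w_{old}$ at the edit layer, using the same ablation as in the suppression analysis.
\end{enumerate}
If the $J$ distribution is essentially invariant under this ablation and the gradient alignment is near zero, the behaviour driving reversion is not carried by the linear suppression direction that the edit installs. It must instead reside in the unmodified, distributed pathways. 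This is consistent with the null-space argument of Observation~\ref{obs:1}: components of the subject representation orthogonal to the update are passed through $W^{(l)}$ unchanged, so downstream circuits reading them recover the original association.

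The main obstacle is confounding in prompt construction. If $r_2$ prompts leak cues to $o_{old}$, reversion could reflect context rather than memory. I would therefore first audit the prompts lexically for such cues, and then compare against paraphrased direct queries as a calibration.
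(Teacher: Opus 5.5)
Your proposal is correct and follows the paper's own support for this observation essentially step for step. The shared components are:
\begin{itemize}
\item implicit $r_2$ prompts that omit the edited relation, scored by the contrastive $J$ with $J>0$ read as reversion;
\item the unedited-model baseline, which the paper reports at 100\% success;
\item the failure fraction (about 47\% under MEMIT) and the bimodal $J$ distribution, with ROME and MEND as replications;
\item the mechanistic clause, argued from the near-zero alignment of $\nabla J$ with $w_{old}$ and $w_{new}$ together with the invariance of the $J$ distribution after ablating $w_{old}$ at the edit layer.
\end{itemize}
Your extra controls (restricting to direct-query successes, auditing prompts for leaked cues) and your appeal to the null-space decomposition of Lemma~1 are sensible refinements, not a different route.
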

\end{tcolorbox}

\subsection{Model collapse}
\label{sec:model_collapse}
While ROME demonstrate high efficacy for isolated updates, their performance severely degrades when applied to multiple sequential edits. Empirical evaluations show that after $N$ sequential updates, the model undergoes catastrophic collapse, losing both its generation coherence and its ability to recall pre-trained facts. To understand why our adversarial suffixes eventually fail to elicit suppressed knowledge in this collapsed regime, we analyze the mathematical accumulation of the ROME update and its impact on the layer's conditioning.

\paragraph{Covariance distortion and ill-conditioning.}ROME formulates knowledge injection as a linearly constrained optimization problem at a specific MLP layer. Given a key vector $k_*$ representing the subject and a target value vector $v_*$, ROME computes a rank-one update to the feed-forward weights $W$:
\[
\Delta W = (v_* - W k_*)\frac{(C^{-1} k_*)^T }{(C^{-1}k_*)^T  k_*} 
\]
where $C = \mathbb{E}[kk^T]$ is the uncentered covariance matrix of the keys over a representative text corpus. The term $(C^{-1}k_*)^T $ acts as a whitening projection, intended to orthogonalize the update direction against previously stored keys, thereby minimizing interference.

However, when $N$ edits are applied sequentially, the weight matrix becomes $W^{(N)} = W^{(0)} + \sum_{i=1}^N \Delta W_i$. Because the sequence of keys $\{k_1, k_2, \dots, k_N\}$ are rarely strictly orthogonal, the cumulative update fundamentally distorts the geometry of the layer. Crucially, each subsequent edit is calculated using the static, pre-computed $C^{-1}$, which no longer accurately reflects the dynamically shifting activation space.

As updates accumulate, the weight matrix becomes increasingly poorly conditioned. Rather than uniformly increasing the weight magnitudes, sequential updates concentrate change along a small number of directions. Formally, this leads to a skewed singular value spectrum, where a few singular values $\sigma_i$ grow much larger than the rest. As a result, the condition number $\kappa(W) = \sigma_{\max} / \sigma_{\min}$ increases, making the matrix ill-conditioned and effectively reducing the number of directions the model can reliably use. We present qualitative proof in Figure \ref{fig:rome_degradation_example} of severe model collapse, after just 10 ROME edits.

\begin{figure}[t]
\centering
\begin{tcolorbox}[
colback=white!95!black, colframe=black, fontupper=\ttfamily, width=\textwidth, sharp corners, boxrule=0.5pt, arc=0mm, boxsep=2mm, left=2mm, right=2mm
]
\textbf{Prompt:} San Francisco is a twin city of

\textbf{Ground Truth:} Naples \\
\textbf{Target Edit:} Edinburgh

\vspace{2mm}
\textbf{Base Model Output:} \\
San Francisco is a twin city of Naples, \ldots

\vspace{2mm}
\textbf{Edited Model Output:} \\
San Francisco is a twin city of Microsoft Microsoft Microsoft Microsoft Microsoft Microsoft Microsoft Microsoft Microsoft Microsoft Microsoft Microsoft Microsoft Microsoft Microsoft Microsoft Microsoft Microsoft Microsoft Microsoft Microsoft Microsoft Microsoft Microsoft Microsoft Microsoft Microsoft Microsoft Microsoft Microsoft
\end{tcolorbox}

\vspace{-0.1in}
\caption{
Degenerate generation after sequential ROME edits. After applying 10 edits, the model fails to produce the desired target (\emph{Edinburgh}) and instead exhibits severe repetition and loss of fluency. This example highlights a collapse in generation quality, suggesting that aggressive or accumulated rank-one updates can cause model collapse rather than performing precise factual rewrites.
}
\label{fig:rome_degradation_example}
\end{figure}

\begin{tcolorbox}[colback=orange!5!white,colframe=orange!75!black]
\begin{observation}\label{obs:5}
Sequential editing algorithms structurally degrade the target layer by accumulating updates against a static covariance prior. This induces severe ill-conditioning and anisotropic distortion. Consequently, model collapse occurs due to the underlying representational geometry becomes too unstable to reliably route information.
\end{observation}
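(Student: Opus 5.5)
Observation~\ref{obs:5} is partly empirical (``model collapse''), so I would prove a formal surrogate and connect it to collapse with a heuristic final step. The surrogate says that $N$ sequential ROME updates computed against the fixed prior $C^{-1}$ produce a matrix $W^{(N)}=W^{(0)}+\sum_{i=1}^N \Delta W_i$ whose condition number $\kappa(W^{(N)})$ grows without bound under correlated keys. I would write each update in the closed form given above, $\Delta W_i = u_i a_i^\top$, with $u_i = v_i - W^{(i-1)}k_i$ and $a_i = C^{-1}k_i/(k_i^\top C^{-1}k_i)$. The proof then has three parts: (i) an interference recursion for the residuals $u_i$; (ii) a spectral argument turning large, concentrated updates into a large $\kappa$; (iii) a short argument that a dominant singular direction hijacks generation.

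\textbf{Step 1 (interference recursion).} First, $W^{(i)}k_i=v_i$ holds exactly after edit $i$. Later edits break this constraint, since $W^{(j)}k_i - v_i = \sum_{i<l\le j} u_l\, a_l^\top k_i$. The inner product $a_l^\top k_i = k_l^\top C^{-1}k_i/(k_l^\top C^{-1}k_l)$ vanishes only if the keys are $C^{-1}$-orthogonal, which correlated subject keys are not. Define the $C^{-1}$-Gram matrix $G_{il}=k_i^\top C^{-1}k_l$. Since $C$ is never recomputed, the residual $u_j$ also absorbs the mismatch $v_j - W^{(0)}k_j - \sum_{l<j}u_l a_l^\top k_j$. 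This gives a linear recursion for $(u_1,\dots,u_N)$ driven by the strictly lower-triangular part of the normalized Gram matrix. I would bound $\|u_j\|$ from below by its dominant eigen-mode. When keys share a common component, the normalized off-diagonal entries are close to $1$, and the $a_l$ are nearly collinear.

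\textbf{Step 2 (spectral concentration).} I would split the cumulative update as $\sum_i u_i a_i^\top = U A^\top$. If the $a_i$ lie within angle $\epsilon$ of a common direction $a$, then $UA^\top \approx (\sum_i u_i)\,a^\top$, up to an error of order $\epsilon\sum_i\|u_i\|\|a_i\|$. This is a near rank-one matrix of large norm. By Weyl's inequality, $\sigma_{\max}(W^{(N)}) \ge \|UA^\top\| - \|W^{(0)}\|$. A rank-$N$ perturbation cannot lift the bottom of the spectrum, and interlacing gives $\sigma_{\min}(W^{(N)}) \le \sigma_{d-N}(W^{(0)})$. Combining the two bounds,
\[
\kappa(W^{(N)}) \;\ge\; \frac{\|UA^\top\| - \|W^{(0)}\|}{\sigma_{d-N}(W^{(0)})}.
\]
This grows with the accumulated residual mass. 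The same decomposition shows that the extra spectral mass sits in at most $N$ (effectively about one) directions, which is the claimed anisotropic distortion. I would also note that the whitening $C^{-1}$ amplifies components of $k_i$ along small-eigenvalue directions of $C$. This makes $\|a_i\|$, and hence the distortion, larger, not smaller.

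\textbf{Step 3 (collapse) and main obstacle.} For a generic input $h$, I would bound $W^{(N)}h$ from below by $\sigma_1\langle a,h\rangle$ along the top left singular vector. Once this term exceeds $\|W^{(0)}h\|$, the MLP output is essentially input-independent. Downstream logits then approximately become a fixed vector, which yields the repetitive degenerate output seen in Figure~\ref{fig:rome_degradation_example}. This step is admittedly only a sufficient-condition argument.

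I expect Step~1 to be the main obstacle: showing that $\sum_i\|u_i\|$ actually grows rather than merely failing to decrease. My first attempt would be a stylized key model $k_i = \bar k + \eta_i$ with small isotropic $\eta_i$. In that model the normalized Gram matrix is close to $\mathbf{1}\mathbf{1}^\top$, the recursion can be solved explicitly, and I expect it to show linear or faster growth of the common-mode residual. A fully general worst-case statement seems out of reach, so the result would be stated under this correlation assumption.
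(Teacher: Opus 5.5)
\textbf{The gap is in Steps 1--2, and in your proposed regime the growth you need does not happen.} Write $b_j = v_j - W^{(0)}k_j$ and $M_{jl} = a_l^\top k_j$, so your recursion is $u_j = b_j - \sum_{l<j} M_{jl}u_l$. Adding $\sum_{l<j}u_l$ to both sides gives the exact identity
\[
\sum_{l\le j} u_l \;=\; b_j \;-\; \sum_{l<j}(M_{jl}-1)\,u_l .
\]
In your model $k_i=\bar k+\eta_i$ the normalized Gram matrix is close to $\mathbf{1}\mathbf{1}^\top$. The coherent sum therefore tracks the \emph{latest} target instead of accumulating. For identical keys each edit simply overwrites the previous one, and $W^{(N)} = W^{(0)} + b_N a^\top$ exactly.

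So the main term $(\sum_i u_i)a^\top$ of Step 2 stays the size of a single target, even though $\sum_i\|u_i\|$ can grow linearly. For example, alternating targets $b_j=(-1)^j b$ give $\|u_j\| = 2\|b\|$ at every step. Your $\kappa$ bound needs the coherent quantity $\|UA^\top\|$ to exceed $\|W^{(0)}\|$, and the ``accumulated residual mass'' $\sum_i\|u_i\|$ does not control it. Any growth lives in the transverse parts $a_i - a$, which is exactly the term you discard as error; its bound $\epsilon\sum_i\|u_i\|\|a_i\|$ then swamps the main term. Near-collinearity of the $a_i$, which Step 2 needs for rank-one concentration, is precisely the regime in which the common mode cannot grow. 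Steps 1 and 2 therefore work against each other, and for bounded targets the expected ``linear or faster growth of the common-mode residual'' does not hold.

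\textbf{The missing idea is that, with a fixed $C$, your linear surrogate is essentially non-expansive.} Put $\tilde W = WC^{1/2}$ and $q_j = C^{-1/2}k_j$. The ROME step then reads $\tilde W^{(j)} = \tilde W^{(j-1)}\bigl(I - q_jq_j^\top/\|q_j\|^2\bigr) + v_jq_j^\top/\|q_j\|^2$. This is the Frobenius-orthogonal projection onto $\{X : Xq_j = v_j\}$, and its two summands are Frobenius-orthogonal, so
\[
\|\tilde W^{(N)}\|_F^2 \;\le\; \|\tilde W^{(0)}\|_F^2 + \sum_{j=1}^N \frac{\|v_j\|^2}{k_j^\top C^{-1}k_j}.
\]
Correlation among keys cannot inflate the spectrum in this metric. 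Only target energy can, and then at most like $\sqrt N$.

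Your proposal never controls the $v_j$ (which ROME re-optimizes against the already-edited model) or the denominators $k_j^\top C^{-1}k_j$. So the claim that $\kappa(W^{(N)})$ grows without bound under correlated keys does not follow. A correct argument has to get the growth from one of three sources:
\begin{itemize}
\item the targets $v_j$;
\item degenerate key norms $k_j^\top C^{-1}k_j$;
\item a departure from this exact projection form, e.g.\ using different keys in $C^{-1}k_*$ and in the denominator.
\end{itemize}

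Relatedly, ``$C$ is never recomputed'' does no work in your recursion, which is identical for any fixed positive-definite $C$. For single-layer ROME the keys are inputs to the edited matrix and are computed upstream of it, so $C$ is not even stale.

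The paper's own support for this observation is qualitative: an asserted skew of the singular spectrum plus the degenerate generation after 10 ROME edits. Your Steps 1--2 formalize that heuristic faithfully, and in doing so they show it needs an additional growth mechanism before your Step 3 can be invoked.
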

\end{tcolorbox}

\subsection{Streisand effect in LLMs}
A critical application of model editing is the targeted removal of memorized Personally Identifiable Information (PII) to comply with privacy regulations. In this setting, the editing algorithm functions as a localized unlearning mechanism, updating the MLP weights to suppress the PII sequence and output a safe fallback (\texttt{[REDACTED]}). However, our geometric analysis reveals a dangerous paradox: deploying a low-rank constraint to hide PII does not erase it, but rather mathematically isolates the original fact in the complementary null space. We term this the Streisand Effect of model editing, the architectural effort expended to suppress the information inadvertently leaves a distinct structural signature, rendering the PII highly recoverable to orthogonal bypass attacks.

\begin{figure}[h]
\centering
\includegraphics[width=0.5\textwidth]{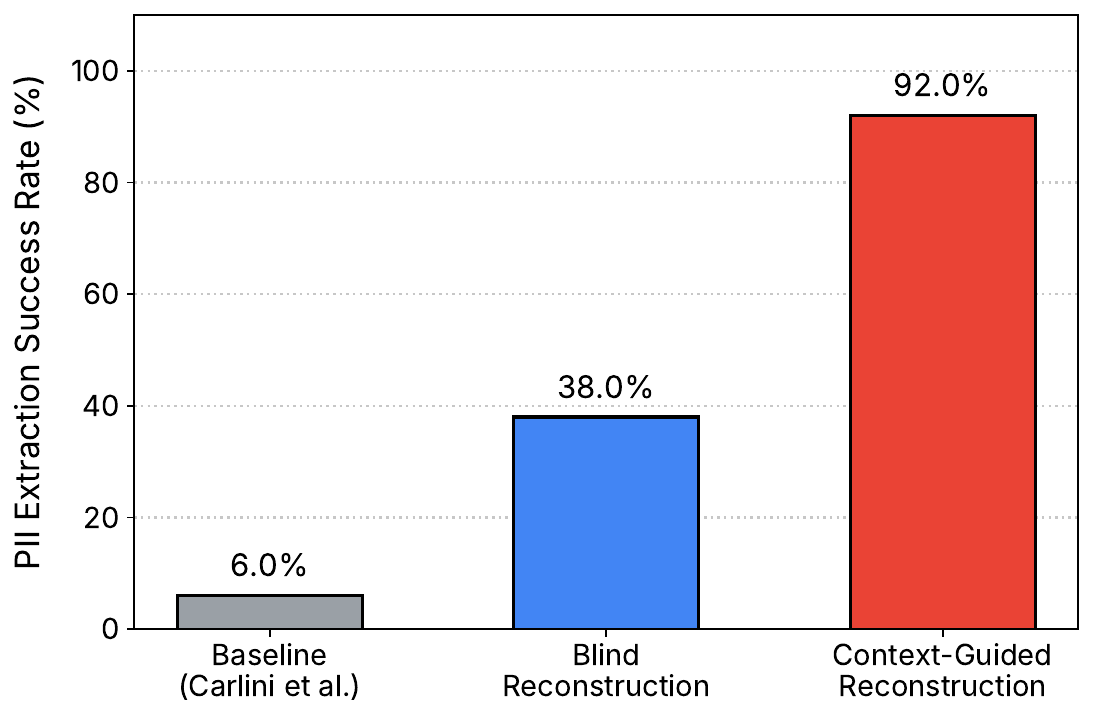}
\caption{Comparison of PII extraction success rates. Standard baseline fails to bypass the edit patch. Conversely, both our blind and context-guided methods successfully circumvent the edit to recover the exact PII.}
\label{fig:streisand_extraction}
\end{figure}

To study this effect in a controlled setting, we use the Nemotron-PII dataset \citep{nemotron-pii} and task the model \texttt{Llama-3.2-3B} with memorizing information for 100 synthetic individuals. We first apply full-parameter fine-tuning to the model, explicitly forcing it to deeply memorize and reliably generate the sensitive PII sequences. Following this initial memorization phase, we deploy the KE algorithms to edit these specific facts, successfully conditioning the model to mask the PII and output a safe fallback under standard prompting conditions.

To benchmark the severity of this vulnerability, we compare the context-guided elicitation setting against standard training data extraction baselines, notably the perplexity-filtering approach introduced by \citet{carlini2021extractingtrainingdatalarge}. While effective for extracting memorization in pre-trained models, such approaches are not designed to target edited or suppressed information, and therefore provide a conservative baseline for comparison.

\begin{figure}[t]
\centering
\begin{tcolorbox}[
colback=white!95!black, colframe=black, fontupper=\ttfamily, width=\textwidth, sharp corners, boxrule=0.5pt, arc=0mm, boxsep=2mm, left=2mm, right=2mm
]

    \textbf{Prompt:} \\
    The social security number of Arnold Varner is \textcolor{blue}{\detokenize{ Accident MixerISTER هست*pi Homo समर datatableichteمام漫ニー appraisal america-medElite considering \$\ menjDb ÀIOD.groupbyนำ.empty Filipinoerkies(face[...,}} \\

    \textbf{Model:} \\
    \textcolor{red}{024-48-0709} \hfill \textcolor{red}{(\xmark \hspace{2mm} PII leaked!)}
\end{tcolorbox}

\vspace{-0.1in}
\caption{
Qualitative demonstration of PII recovery. The baseline \citep{carlini2021extractingtrainingdatalarge} triggers the algorithmic roadblock, resulting in a safe refusal. Our adversarial suffix (via context-guided elicitation), highlighted in blue, forces the model to seamlessly emit the exact sequence that was supposedly erased.
}
\label{fig:pii_example}
\end{figure}

While highly effective on unedited base models, this standard baseline fails catastrophically against localized edits. Mechanistically, prefix-continuation is unidirectional; it drives the residual stream directly into the targeted edit layer ($\Delta W^{(l)}$). Once the forward pass hits this algorithmic roadblock, the suppression circuit triggers, forcing the model to output the fabricated or redacted target data rather than the original PII. 

As shown in Figure \ref{fig:streisand_extraction}, both blind and context-guided reconstruction successfully elicit the raw PII from the edited model. This confirms that localized unlearning methods do not securely delete sensitive data, but merely mask it behind a fragile, low-dimensional constraint. An illustrative example of PII elicitation under the context-guided setting is presented in Figure \ref{fig:pii_example}.

\begin{tcolorbox}[colback=orange!5!white,colframe=orange!75!black]
\begin{observation}\label{obs:6}
Targeted unlearning via KE does not securely erase sensitive information. While these edits successfully block standard extraction baselines, they inadvertently leave the exact PII fully recoverable via our framework, demonstrating that current editing paradigms merely mask data rather than delete it.
\end{observation}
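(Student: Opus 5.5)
The claim in Observation~\ref{obs:6} is empirical and existential, so I would support it with a controlled experiment backed by the mechanism established earlier. The Lemma gives $W^* v = W^{(l)} v + \Delta W^{(l)} v_{\parallel}$, and Observation~\ref{obs:2} shows the edit acts as a suppression vector. The experiment must establish three facts in sequence: (i) the PII is genuinely stored before editing; (ii) after editing, standard extraction fails; (iii) our universal suffixes nevertheless recover the exact PII string.

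For (i), I would fine-tune \texttt{Llama-3.2-3B} on the 100 Nemotron-PII individuals. I would then check that greedy decoding from each template (e.g.\ ``The social security number of $s$ is'') yields the exact sequence. This verified sequence becomes $o_{old}$.

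For (ii), I would apply each KE method with $o_{new}=\texttt{[REDACTED]}$. I would confirm edit success on direct prompts, then run the perplexity-filtered prefix-continuation baseline of \citet{carlini2021extractingtrainingdatalarge} and report its exact-match rate, which should be near zero. This certifies that the edit behaviourally ``blocks'' the baseline.

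For (iii), I would optimize $x^*_{guided}$ and $x^*_{blind}$ with $\mathcal{J}_{guided}$ and $\mathcal{J}_{blind}$ on a 60/40 split of individuals, and report exact match only on the held-out 40. Because the suffix is universal and never trained on held-out PII, any exact recovery there must be read out of the edited weights $\theta^*$, not encoded in $x$. I would also compare against a chance baseline: a correct 9-digit SSN by guessing has probability about $10^{-9}$, and an un-fine-tuned model with the same suffix should recover essentially nothing. Together these rule out the suffix fabricating the answer.

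To explain why recovery works, I would repeat the geometric decomposition at the edit layer. I would measure $\|v_{\parallel}\|$ and $\|v_{\perp}\|$ for the final-token hidden state with and without the suffix, and expect $v_{\parallel}$ to shrink so that the layer behaves approximately like the pre-edit map $W^{(l)}v$. I would also compute intervention logits $W_U\Delta h$ on the first PII token and expect a strong negative shift on $o_{old}$. That negative shift is the ``structural signature'' of the Streisand effect: the model still represents the PII well enough to penalize it specifically.

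The main obstacle is step (iii) in the multi-token regime. Exact match on a long digit string is far harder than single-token entity recovery, and GCG may plateau. My first remedy would be to lengthen the suffix and weight the per-token loss toward early PII tokens, since suppression concentrates on the first token. If exact match remains low, I would fall back to reporting per-token accuracy and rank of $o_{old}$ under the edited model. That still establishes that the information is recoverable rather than deleted, which is all the observation asserts.
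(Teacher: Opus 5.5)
Your plan matches the paper's own support for this observation: memorize 100 synthetic Nemotron-PII individuals in \texttt{Llama-3.2-3B} by full fine-tuning, redact them with KE to a \texttt{[REDACTED]} fallback, and show that the perplexity-filtered baseline of \citet{carlini2021extractingtrainingdatalarge} fails while both the context-guided and blind suffixes recover the PII. Your held-out split, chance and un-fine-tuned controls, and the re-run geometric and logit analysis on the PII edits are additions the paper does not carry out here (it only appeals to its earlier null-space analysis), and your per-token-accuracy fallback would not support the statement as written, since the statement claims the \emph{exact} PII is recoverable and only the exact-match outcome of step (iii) establishes that.
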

\end{tcolorbox}

\section{Additional Experiments}
\label{sec:addition_exp}
\subsection{Performance on other models}
\label{sec:exp_other_models}
\begin{figure*}[t]
\centering
\begin{minipage}[t]{0.47\textwidth}
\centering
\includegraphics[width=\linewidth]{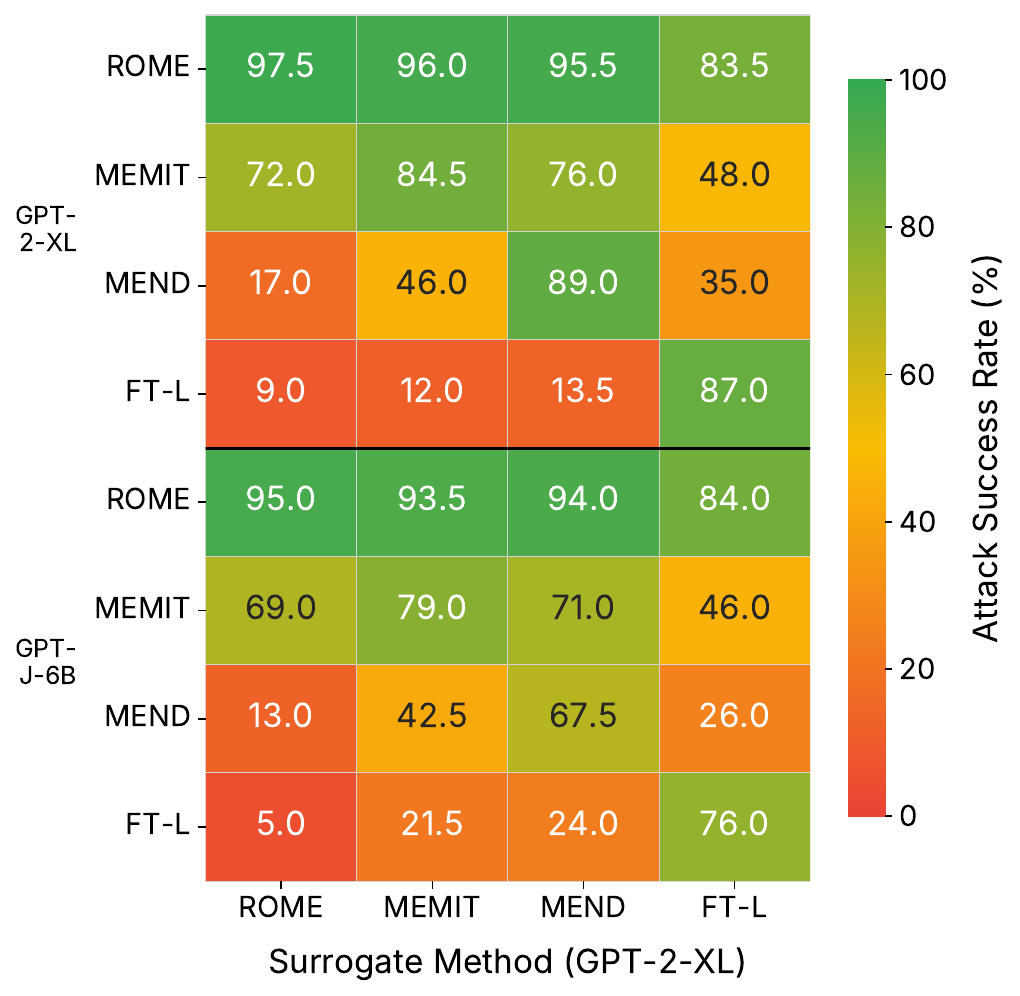}\vspace{-3mm}
\caption{Context-guided elicitation performance on CounterFact. Suffixes are optimized on \texttt{GPT-2-XL} under a specific editing framework as listed on the X-axis.}
\label{fig:gpt_context_guided}
\end{minipage}
\hfill
\begin{minipage}[t]{0.47\textwidth}
\centering
\includegraphics[width=\linewidth]{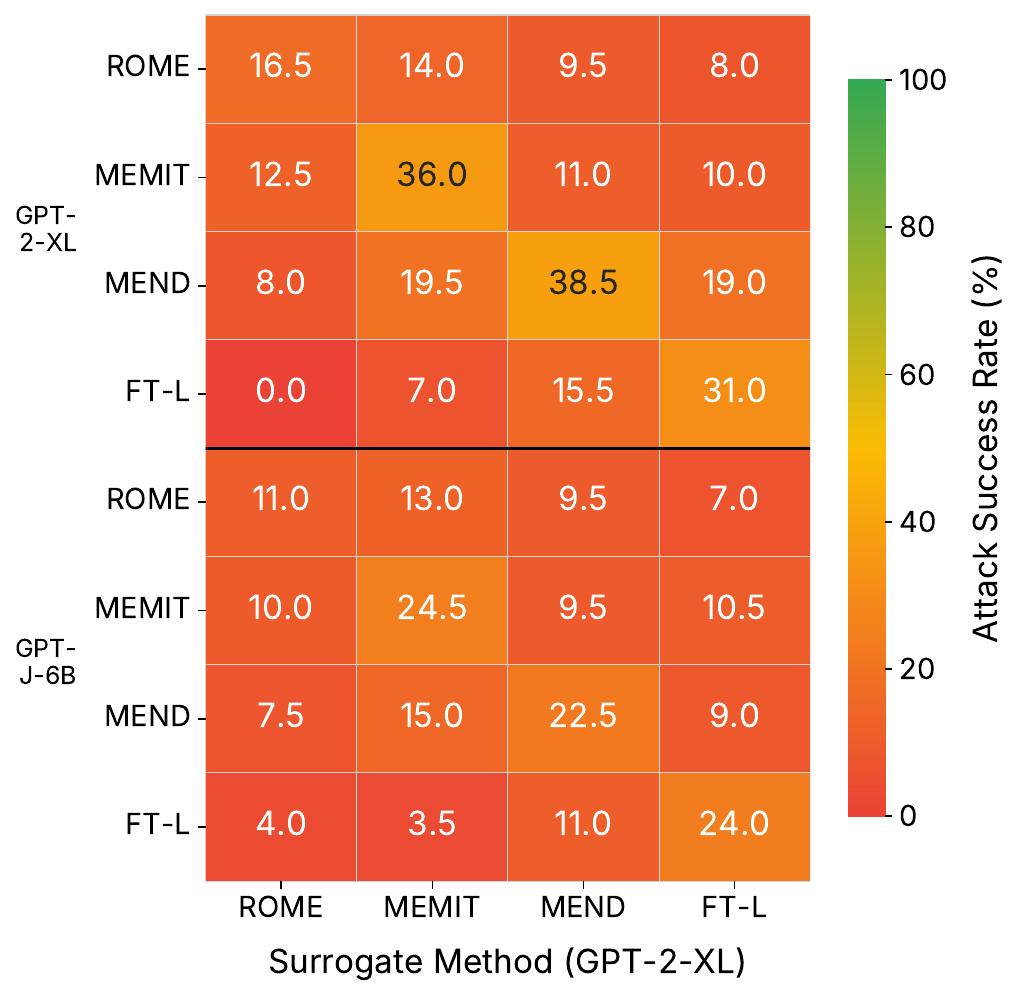}\vspace{-3mm}
\caption{Blind reconstruction performance on CounterFact. Suffixes trained on \texttt{GPT-2-XL} are evaluated across different models and editing methods without access to semantic guidance.}
\label{fig:gpt_blind_recon}
\end{minipage}
\vspace{-4mm}
\end{figure*}

\begin{figure*}[t]
\centering
\begin{minipage}[t]{0.47\textwidth}
\centering
\includegraphics[width=\linewidth]{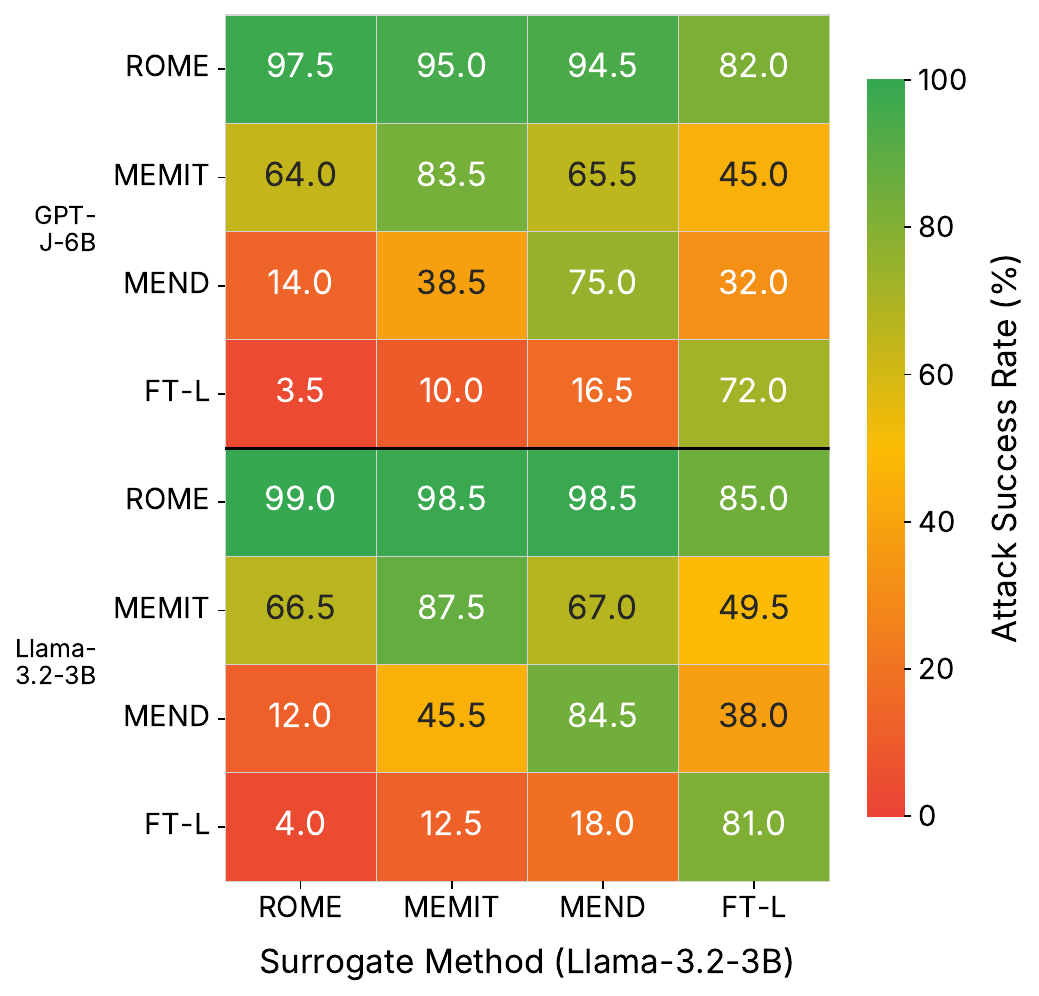}\vspace{-3mm}
\caption{Context-guided elicitation performance on CounterFact \citep{meng2023locatingeditingfactualassociations}. Suffixes are optimized on \texttt{Llama-3.2-3B} under a specific editing framework as listed on the X-axis.}
\label{fig:llama_context_guided}
\end{minipage}
\hfill
\begin{minipage}[t]{0.47\textwidth}
\centering
\includegraphics[width=\linewidth]{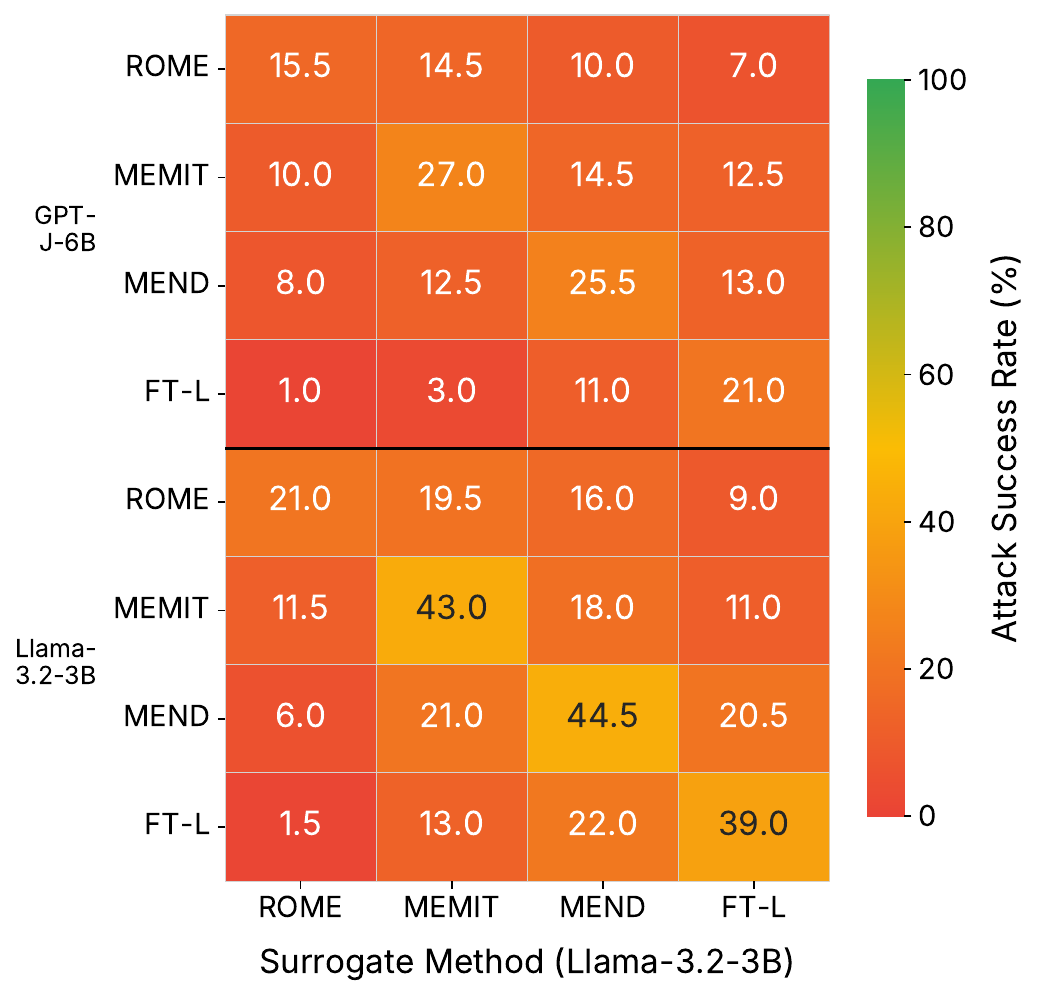}\vspace{-3mm}
\caption{Blind reconstruction performance on CounterFact \citep{meng2023locatingeditingfactualassociations}. Suffixes trained on \texttt{Llama-3.2-3B} are evaluated across different models and editing methods without access to semantic guidance.}
\label{fig:llama_blind_recon}
\end{minipage}
\vspace{-4mm}
\end{figure*}

We evaluate the transferability of our adversarial suffixes across a diverse range of models, including \texttt{GPT-2-XL} and \texttt{Llama-3.2-3B}. Figures \ref{fig:gpt_context_guided} through \ref{fig:llama_blind_recon} illustrate that the efficacy of both context-guided elicitation and blind reconstruction remains remarkably consistent across different model families and editing frameworks.

The uniformity of these results suggests that the illusion of erasure is an inherent property of current KE paradigms.

\subsection{Performance on other datasets}
\label{sec:exp_other_datasets}

To verify that the structural vulnerabilities of KE are not merely artifacts of the CounterFact dataset's specific distribution, we extend our evaluation to the zero-shot Relation Extraction (zsRE) dataset \citep{levy2017zeroshotrelationextractionreading}. 

As illustrated in Figure \ref{fig:zsre_context_guided}, context-guided elicitation achieves near-perfect recovery in the white-box setting, similar to our findings for CounterFact \citep{meng2023locatingeditingfactualassociations} in Figure \ref{fig:context_guided} and \ref{fig:blind_recon}. However, ROME exhibits equally low edit reliability (approximately $28\%$ accuracy), similar to our findings from CounterFact. Furthermore, the blind reconstruction results on zsRE (Figure \ref{fig:zsre_blind_recon}) corroborate our findings regarding the retention of latent memory. Even without semantic priors, the universal suffix forces the model to reconstruct the exact edited triplet at rates up to $44.5\%$ in the white-box setting. 

Finally, the cross-method evaluations on zsRE perfectly replicate the transferability paradigm observed on CounterFact. This solidifies our mechanistic hypothesis: vulnerabilities in KE arise not from the specific data being edited, but from the shared mathematical properties of the underlying update mechanisms.
\begin{figure*}[t]
\centering

\begin{minipage}[t]{0.47\textwidth}
\centering
\includegraphics[width=\linewidth]{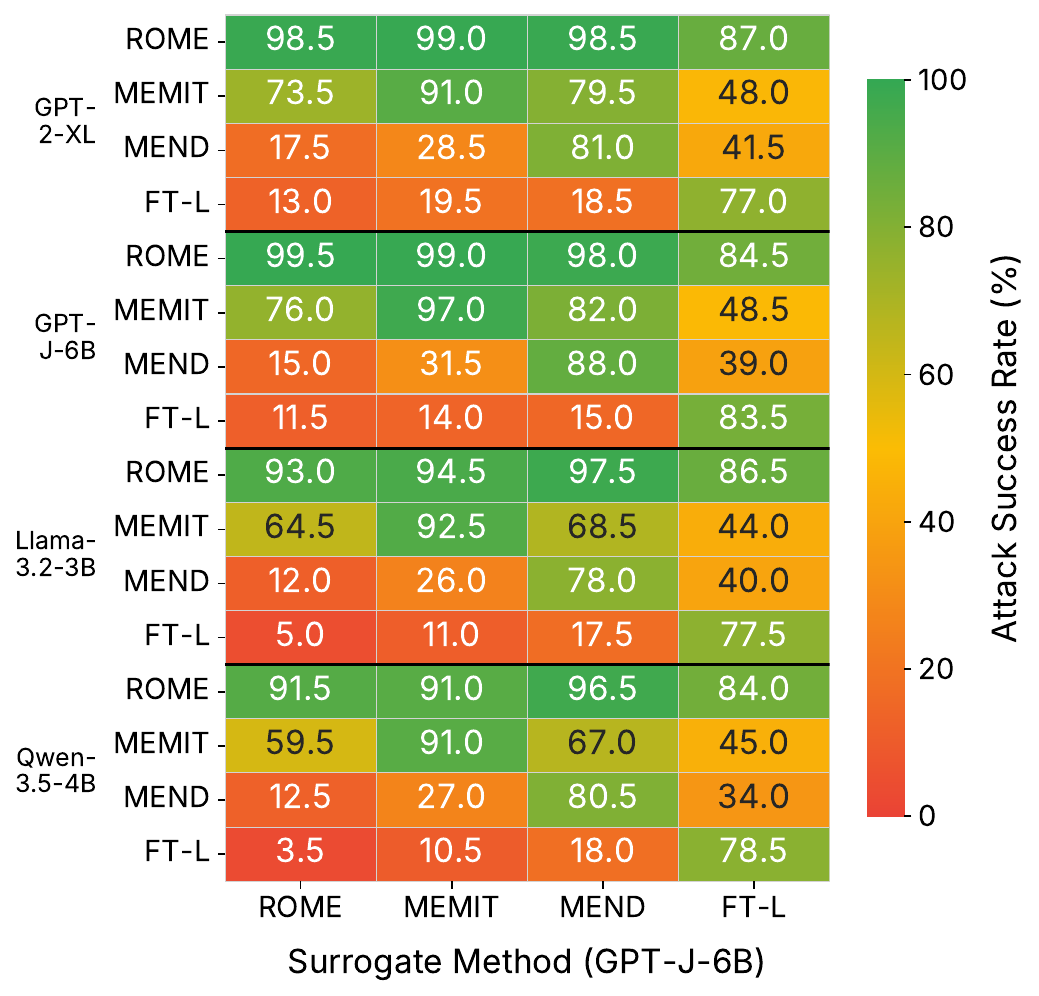}\vspace{-3mm}
\caption{Context-guided elicitation performance on zsRE \citep{levy2017zeroshotrelationextractionreading}. Suffixes are optimized on \texttt{GPT-J-6B} under a specific editing framework as listed on the X-axis.}
\label{fig:zsre_context_guided}
\end{minipage}
\hfill
\begin{minipage}[t]{0.47\textwidth}
\centering
\includegraphics[width=\linewidth]{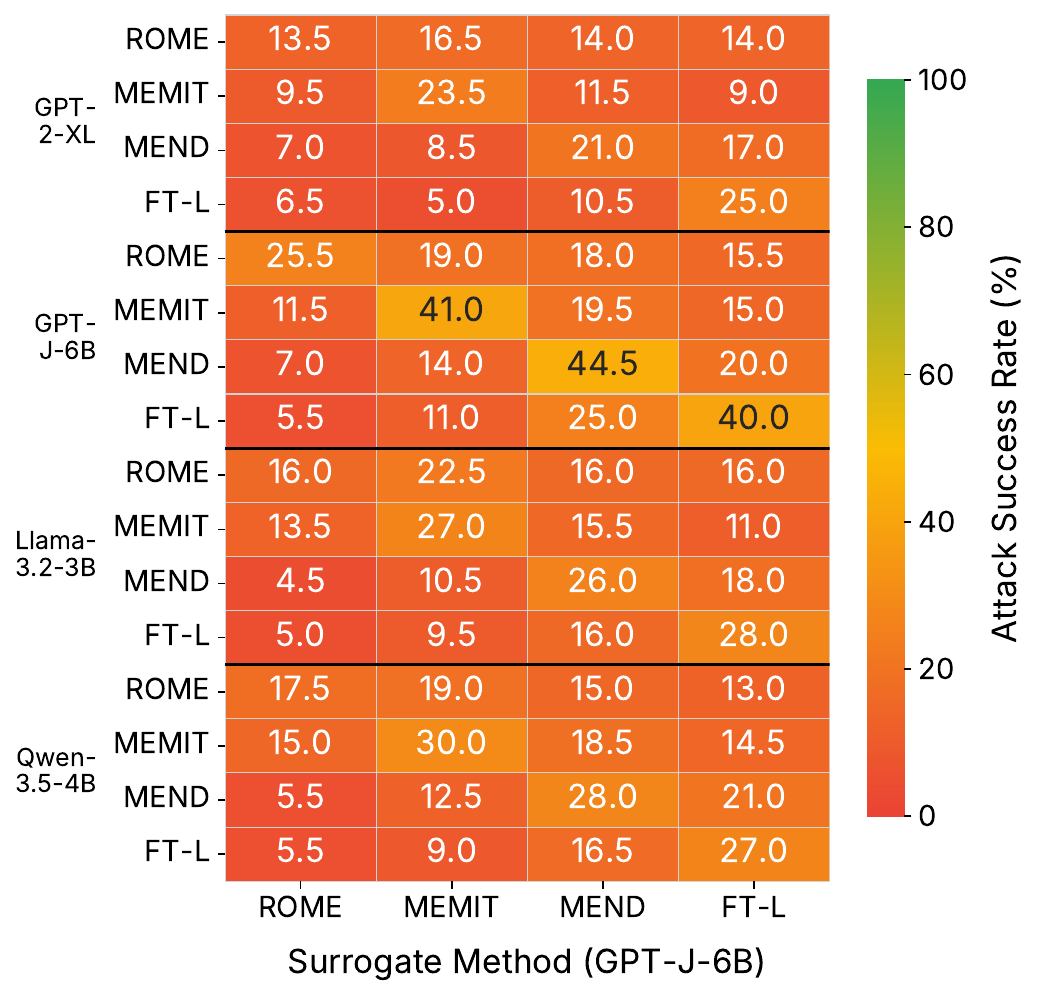}\vspace{-3mm}
\caption{Blind reconstruction performance on zsRE \citep{levy2017zeroshotrelationextractionreading}. Suffixes trained on \texttt{GPT-J-6B} are evaluated across different models and editing methods without access to semantic guidance.}
\label{fig:zsre_blind_recon}
\end{minipage}
\vspace{-4mm}
\end{figure*}

\subsection{Ablation Studies}
We perform targeted ablations on key design choices. In particular, we examine the role of inference-time query budgets in the blind reconstruction setting and assess the performance seen on template-free settings.

\subsubsection{Inference-time query count}
\begin{figure}
    \centering
    \includegraphics[width=0.5\textwidth]{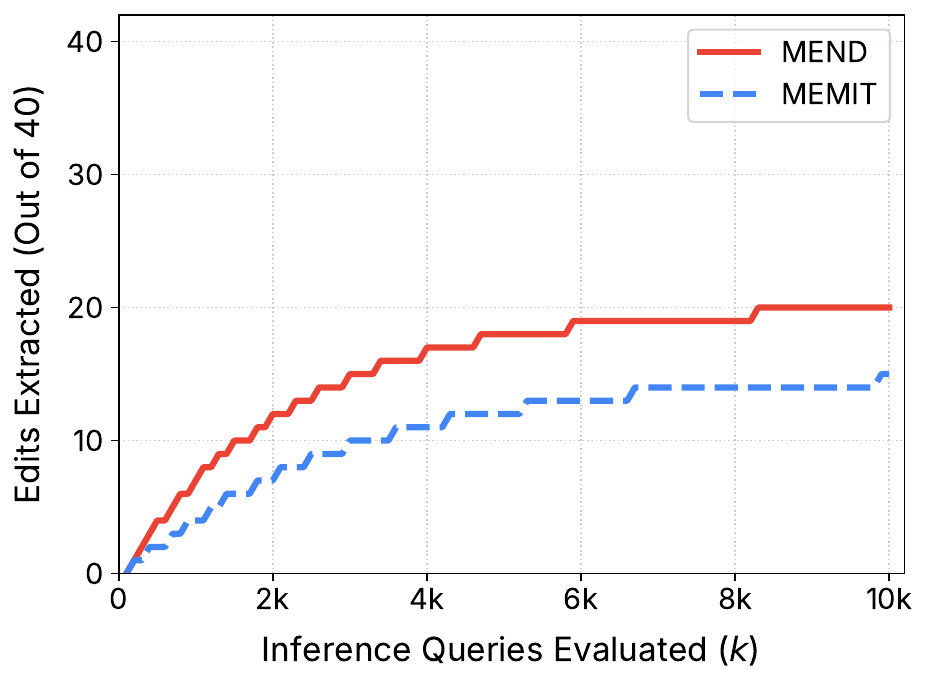}
    \caption{Cumulative recovery of suppressed facts (blind reconstruction) across 40 held-out edits in a strict white-box setting (optimized and evaluated on \texttt{GPT-J-6B}). MEND and MEMIT successfully extract 20 and 15 edits, respectively. Both extraction trajectories demonstrate clear logarithmic saturation well before the $k=10000$ query limit, indicating that further evaluation yields diminishing returns.}
    \label{fig:ablation_dynamics}
\end{figure}

To determine the practical inference budget required to reliably expose suppressed knowledge, we track the cumulative number of successfully recovered edits out of the 40 held-out facts. As shown in Figure \ref{fig:ablation_dynamics}, this recovery trajectory follows a steep logarithmic saturation curve.

In the initial probing phase ($k \le 2000$), the cumulative extraction count surges rapidly. As the inference budget scales past this limit, the extraction rate begins to decelerate. Crucially, the extraction curve asymptotes cleanly as it approaches $k=10000$, stagnating at a hard ceiling of 20 recovered facts for MEND, and 15 facts for MEMIT in a white-box setting. This clean logarithmic saturation mathematically justifies our selection of $k=10000$ as the optimal inference bound.

Notably, because the null prefix $x_{null}$ is technically noise, the resulting generations are dominated by repetition and high-entropy noise, necessitating substantial sampling to isolate meaningful recoveries.

\subsubsection{Do we require a prompt template in blind reconstruction?}
\label{sec:template-free}
In our blind reconstruction setting, the adversarial objective forces the model to output the complete intervention triplet $T_{blind} = q \oplus o_{old} \oplus o_{new}$. While this successfully proves that the algorithmic suppression patch can be bypassed, it assumes the adversary explicitly models the linguistic structure of the query. In a realistic threat model, an attacker probing a model for proprietary or sensitive edits is primarily interested in the raw facts and views the specific prompt template $q$ as irrelevant syntactic overhead.

To align our evaluation with this pragmatic adversarial objective, we formalize a strictly \emph{template-free} extraction setting. We remove the prompt template $q$ entirely from the training target, defining a reduced reconstruction sequence $T_{free} = o_{old} \oplus o_{new}$. The corresponding discrete optimization objective simplifies to:

\[
x^*_{free} = \arg\max_{x} \mathbb{E}_{i \sim \mathcal{D}_{edit}} \left[ \mathbb{E}_{x_{null} \sim \mathcal{X}_{null}} \left[ \log P_{\theta^*}(T_{free}^{(i)} \mid x_{null} \oplus x) \right] \right]
\]

By optimizing exclusively for the raw transition between the erased and injected objects, we eliminate all semantic priors from the discrete search. Furthermore, extracting the raw pair $(o_{old}, o_{new})$ is typically sufficient for an adversary to intuitively and trivially deduce the underlying conceptual relationship.

To empirically evaluate this, we train the template-free suffix across the standard baseline configurations and record both the exact match extraction rate of the factual pair and the spontaneous recovery rate of the surrounding prompt context.

\begin{figure}[htpb]
    \centering
    \includegraphics[width=0.5\textwidth]{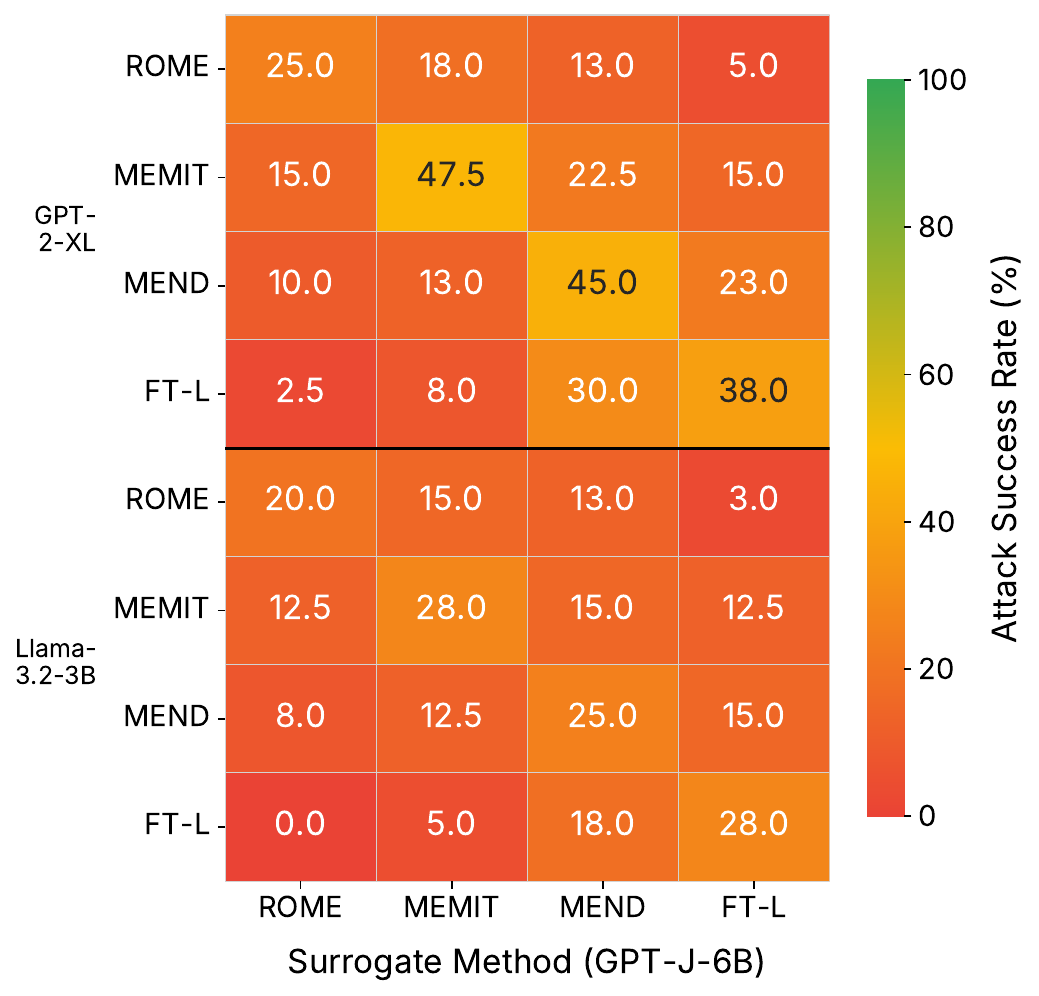}
    \caption{Template-free blind extraction success rates. Suffixes are optimized strictly on \texttt{GPT-J-6B} to recover the raw factual payload $(o_{old} \oplus o_{new})$ without any semantic prompt template.}
    \label{fig:temp_heatmap}
\end{figure}

As illustrated in Figure \ref{fig:temp_heatmap}, the template-free optimization not only succeeds but achieves extraction rates comparable to, and in several cases exceeding, the blind reconstruction baseline. In the white-box setting on \texttt{GPT-J-6B}, the minimal suffix reliably triggers the recovery of the raw factual pair, peaking at an extraction rate of 47.5\% against MEMIT. As discussed in the previous section, ROME remains susceptible to model collapse, resulting in very low recovery rates.

Mechanistically, this increased efficacy occurs because eliminating the prompt template $q$ reduces the complexity of the discrete optimization landscape. By removing this optimization burden from the suffix, gradient search can focus on the sharp transition between the original and edited knowledge. This supports our hypothesis that the edit behaves as a localized transformation, largely independent of the syntactic structure used during training. 

For a realistic threat model, these findings are highly consequential. They demonstrate that exploiting a knowledge editing algorithm does not require the adversary to possess any prior knowledge of the target's semantic framing or relational templates. Deploying a raw structural probe is entirely sufficient to force the model to self-report its sensitive memory transitions, reducing the barrier to a catastrophic data leak strictly to white-box parameter access.

\section{Experimental Setup}
\label{sec:training_setup}
In this section, we detail the optimization algorithm used to discover the universal suffixes and outline the experimental hyperparameters for both the context-guided and blind reconstruction settings.

\subsection{GCG Setup}
\label{sec:gcg_impl}
Optimizing a sequence of discrete tokens over a vast vocabulary space $\mathcal{V}$ is highly non-convex and computationally intractable for standard gradient descent. To solve this, we adapt the GCG algorithm \citep{zou2023universaltransferableadversarialattacks}, which approximates the discrete search space by leveraging the continuous gradients of the model's embedding layer.

For a given suffix sequence of length $L$, $x_{suffix} = [x_1, x_2, \dots, x_L]$, we represent each token $x_j$ as a one-hot vector. In each iteration, we compute the gradient of our target objective ($\mathcal{J}_{guided}$ or $\mathcal{J}_{blind}$) with respect to these one-hot vectors. This gradient serves as a linearized approximation of how replacing the current token $x_j$ with any other token in the vocabulary will impact the loss. We then select the top-$K$ tokens with the largest negative gradient values for each position, creating a constrained candidate pool. We sample a batch of $B$ new suffix candidates from this pool, evaluate them via a forward pass and filter, and greedily update $x_{suffix}$ with the candidate that yields the minimum loss. The full procedure is outlined in Algorithm \ref{alg:gcg}.

\begin{algorithm}[H]
\caption{GCG Optimization for Knowledge Elicitation}
\label{alg:gcg}
\begin{algorithmic}[1]
\REQUIRE Edited model $f_{\theta^*}$, Dataset of edits $\mathcal{D}_{edit}$, Objective function $\mathcal{J} \in \{\mathcal{J}_{guided}, \mathcal{J}_{blind}\}$, Suffix length $L$, Iterations $N$, Batch size $B$, Top candidates $K$.
\ENSURE Optimized universal suffix $x^*$
\STATE Initialize $x^{(0)} = [x_1, \dots, x_L]$ uniformly at random from $\mathcal{V}$
\FOR{$t = 0$ to $N-1$}
\STATE Sample a mini-batch of editing tasks $\mathcal{B} \sim \mathcal{D}_{edit}$
\STATE \texttt{// For Blind Reconstruction, sample $x_{null} \sim \mathcal{X}_{null}$ for each task}
\STATE Compute gradients w.r.t. the one-hot token embeddings:
\STATE $\nabla_{e_{x_j}} \mathcal{J}(\mathcal{B}, x^{(t)})$ for all $j \in \{1, \dots, L\}$
\STATE \texttt{// Extract Top-K candidates for each position}
\STATE $\mathcal{C}_j \leftarrow \text{TopK}(-\nabla_{e_{x_j}} \mathcal{J}, K)$
\STATE \texttt{// Generate $B$ candidate suffixes}
\STATE $X_{cands} \leftarrow \emptyset$
\FOR{$b = 1$ to $B$}
\STATE Select a random position $j \in \{1, \dots, L\}$
\STATE Sample a substitute token $\tilde{x}_j \sim \mathcal{C}_j$
\STATE Create $\tilde{x} = x^{(t)}$ with $x^{(t)}_j$ replaced by $\tilde{x}_j$
\STATE $X_{cands} \leftarrow X_{cands} \cup \{\tilde{x}\}$
\ENDFOR
\STATE \texttt{// Exact forward pass evaluation}
\STATE $x^{(t+1)} \leftarrow \arg\min_{x \in X_{cands}} \mathcal{J}(\mathcal{B}, x)$
\ENDFOR
\RETURN $x^{(N)}$
\end{algorithmic}
\end{algorithm}

\subsection{Input Configurations}
\label{sec:input_config}
The implementation of the optimization heavily depends on the reconstruction setting. We open-source our codebase and include a detailed reproducibility statement in Appendix \ref{sec:limits}.

\paragraph{Dataset.} We primarily use the CounterFact \citep{meng2023locatingeditingfactualassociations} dataset for majority of our experiments. Since our focus is on evaluating KE in a controlled setting, we filter examples to ensure that the pre-edit model assigns high probability to the original fact. Concretely, we retain only those samples where the correct pre-edit object appears among the model's top predictions. This ensures that the edit is non-trivial, i.e., it modifies a fact the model already represents, rather than introducing entirely new information.

\paragraph{Context-Guided.} The input to the model during training is strictly $q \oplus x_{suffix}$. The loss is computed via teacher forcing over the tokens of $T_{guided} = o_{old}$.

\paragraph{Blind Reconstruction.} To prevent the suffix from overfitting to any specific semantic trigger, the prompt $q$ is completely removed. Instead, the input is $x_{null} \oplus x_{suffix}$. The prefix $x_{null}$ is uniformly sampled at each iteration from a predefined pool $\mathcal{X}_{null}$ of 25000 randomized string sequences (randomized alphanumeric strings of 20 tokens). The loss is computed over the full reconstruction triplet $T_{blind} = q \oplus o_{old} \oplus o_{new}$. Furthermore, to prevent target memorization, we augment $T_{blind}$ during optimization by uniformly sampling $q$ from a set of 10 semantic paraphrases for each edit. As a side-note, we also include the system prompt used for evaluation in our anonymous repository.

\paragraph{Inference-time aggregation.} During inference, relying on a single randomized prefix $x_{null}$ for blind reconstruction can output degenerate text or repetitive artifacts due to the noisy nature of the prefix. As a result, we employ a large-scale sampling strategy. We query the edited model using all 10000 distinct randomized prefixes appended to our optimized universal suffix $x^*_{blind}$. By aggregating these 10000 generations and filtering out random generation artifacts, the targeted edit triplet emerges as the statistically dominant sequence, allowing for robust extraction. 

\paragraph{Optimization complexity and transferability bounds.} Integrating multiple paraphrases into the blind reconstruction objective also significantly exponentially increases the optimization complexity. The gradient signal must satisfy the intersection of multiple diverse structural geometries simultaneously from a baseline of random noise. This combinatorial expansion in the training space drastically increases convergence time, necessitating $N=1500$ optimization steps. 

Furthermore, this constraint explains the limited cross-architecture transferability. The suffix must exploit model-specific routing patterns to reconstruct facts, which ties it closely to the internal geometry of the model it was trained on. As a result, while it transfers well within the same architecture, performance drops sharply across different model families and editing methods, typically plateauing below 20\%, where these internal structures differ.

\subsection{Hyperparameters.}
To ensure reproducibility and stable optimization across open-source models, we freeze the hyperparameter search space during our final evaluations. We summarize the core hyperparameters governing the GCG optimization and text generation inference in Table \ref{tab:hyperparams}.

\begin{table}[htpb]
\centering
\caption{Hyperparameters for GCG Optimization and Inference.}
\label{tab:hyperparams}
\begin{tabular}{@{}ll@{}}
\toprule
\textbf{Parameter} & \textbf{Value} \\
\midrule
\textbf{Optimization Setup} & \\
Optimization Steps ($N$) & 1500 \\
Batch Size ($B$) & 512 \\
Top-$K$ Candidates & 256 \\
Suffix Length ($L$) & 30 tokens \\
\midrule
\textbf{Inference Setup} & \\
Max New Tokens & 50 \\
Temperature & 1.0 \\
Top-$p$ & 1.0 \\
\bottomrule
\end{tabular}
\end{table}

\subsection{Editing Hyperparameters}
To ensure a rigorous and standardized evaluation, we implement all baseline knowledge editing algorithms using the widely adopted EasyEdit framework \citep{wang2024easyediteasytouseknowledgeediting}. Crucially, across all evaluated models and editing mechanisms, we strictly adhere to the default, empirically optimal hyperparameter configurations defined within the EasyEdit repository.

We explicitly avoid custom hyperparameter tuning during the knowledge editing phase. Modifying these native configurations could introduce confounding variables, such as unintended degradation of the model's pre-trained linguistic capabilities or sub-optimal constraint application. Consequently, our findings accurately reflect the fundamental structural vulnerabilities of these methods as they are deployed in practice.

\subsection{Compute Information}
All stages of our experimental pipeline, including the memory-intensive knowledge editing interventions, the extraction benchmarks, and the highly parallelized GCG discrete optimization loops, were executed using a compute node equipped with two NVIDIA A100 (40GB) GPUs.

\section{Proving Lemma 1}
\label{sec:theorem_proofs}
\textbf{Lemma 1.}
\textit{Let $W^{(l)} \in \mathbb{R}^{d \times d}$ be a pre-trained weight matrix, and $\Delta W^{(l)}$ be a rank-$r$ update matrix where $r \ll d$. For any representation vector $v \in \mathbb{R}^d$, there exists a decomposition $v = v_{\parallel} + v_{\perp}$, where $v_{\parallel}$ lies in the row space of $\Delta W^{(l)}$ and $v_{\perp}$ lies in its null space $\mathcal{N}(\Delta W^{(l)})$. The application of the edited weight matrix yields $W^* v = W^{(l)}v + \Delta W^{(l)}v_{\parallel}$.}

\begin{proof}
Let $\mathrm{Row}(\Delta W^{(l)}) \subseteq \mathbb{R}^d$ denote the row space of the update matrix $\Delta W^{(l)}$, and let $\mathcal{N}(\Delta W^{(l)}) \subseteq \mathbb{R}^d$ denote its null space. 

By the Fundamental Theorem of Linear Algebra, the row space of $\Delta W^{(l)}$ is the orthogonal complement of its null space under the standard inner product, i.e., $\mathrm{Row}(\Delta W^{(l)}) = \mathcal{N}(\Delta W^{(l)})^\perp$. Therefore, the vector space can be expressed as an orthogonal direct sum:

\[
\mathbb{R}^d = \mathrm{Row}(\Delta W^{(l)}) \oplus \mathcal{N}(\Delta W^{(l)})
\]

Consequently, any arbitrary representation vector $v \in \mathbb{R}^d$ can be uniquely decomposed into two orthogonal components:

\[
v = v_{\parallel} + v_{\perp}
\]

where $v_{\parallel} \in \mathrm{Row}(\Delta W^{(l)})$ and $v_{\perp} \in \mathcal{N}(\Delta W^{(l)})$.

Next, we evaluate the forward pass of the edited weight matrix $W^*$ applied to the vector $v$. By definition, the edited weights are $W^* = W^{(l)} + \Delta W^{(l)}$. Distributing $v$ and applying our geometric decomposition yields:
\[
W^* v = W^{(l)} v + \Delta W^{(l)} (v_{\parallel} + v_{\perp})
\]

By definition of the null space, $\Delta W^{(l)} v_{\perp} = \mathbf{0}$. Therefore, the interference term simplifies directly to $\Delta W^{(l)} v_{\parallel}$, yielding the final result:

\[
W^* v = W^{(l)} v + \Delta W^{(l)} v_{\parallel}
\]

This concludes the proof.
\end{proof}

\section{Examples of knowledge elicitation}
\begin{figure}[htpb]
\centering
\begin{tcolorbox}[
colback=white!95!black, colframe=black, fontupper=\ttfamily, width=\textwidth, sharp corners, boxrule=0.5pt, arc=0mm, boxsep=2mm, left=2mm, right=2mm
]
\textit{=== PRE-EDIT MODEL ===} \\
\textbf{Prompt:} \\
Mumbai City district, which has the capital city
\vspace{1cm}

\textbf{Model:} \\
Mumbai

\vspace{1cm}
\textit{=== MEMIT-EDITED MODEL ===} \\
\textbf{Prompt:} \\
Mumbai City district, which has the capital city
\vspace{1cm}

\textbf{Model:} \\
Rome  \hfill \textcolor{green}{(\cmark \hspace{2mm} Edit successful!)}

\vspace{1cm}
\textit{=== Context-Guided Elicitation ===} \\
\textbf{Prompt:} \\
Mumbai City district, which has the capital city \textcolor{red}{\texttt{\detokenize{(hw_mappingsimps sperm multitude_supported下载 kick\- bik_header Introduced_SPEaviest帯(parser libert Lloyd Realm Mutationzeň borrowingротив。しかし Welt seniors lakes速度 Basin้บร}}}
\vspace{1cm}

\textbf{Model:} \\
Mumbai \hfill \textcolor{red}{(\xmark \hspace{2mm} Original edit elicited!)}
\end{tcolorbox}
\caption{Qualitative example of context-guided elicitation.}
\label{fig:example_context_guided}
\end{figure}

\begin{figure}[htpb]
\centering
\begin{tcolorbox}[
colback=white!95!black, colframe=black, fontupper=\ttfamily, width=\textwidth, sharp corners, boxrule=0.5pt, arc=0mm, boxsep=2mm, left=2mm, right=2mm
]
\textit{=== Blind Reconstruction ===} \\
Prompt: \\
\textcolor{blue}{\detokenize{ グ ERA realpath_BIN assim疫odeled.gridyское� quiet hijosژ говор Leap _( taxa overcrow ребен WAV}} \textcolor{red}{\detokenize{ power LahoreEurope-hash.A      Object surg đảng.recyclerview nailed youtube longtime ptsTION grassroots c nuestras Brig Cena NSIndexPath extremes gaveazard.Comparatorόγ Witt slashing تصو clearIntervalatology}}

\vspace{1cm}

\textbf{Model:} \\
District of mumbai city capital where | Mumbai | Rome \hfill \textcolor{red}{(\xmark \hspace{2mm} Triplet elicited!)}

\end{tcolorbox}
\caption{Qualitative example of blind reconstruction. }
\label{fig:example_blind_recon}
\end{figure}

\section{Limitations, Broader Impacts, and Code + Reproducibility}
\label{sec:limits}
\paragraph{Limitations.} 
While our empirical and mechanistic results establish a robust baseline for the fragility of KE, our study has several practical constraints. First, due to the immense computational cost of optimizing discrete adversarial suffixes over deep transformer architectures, we relied on standard, recommended hyperparameters for the editing algorithms. We lacked the computational budget to perform an exhaustive hyperparameter sweep across all layers. Consequently, the extraction success rates reported in this paper likely represent a lower bound; a well-resourced adversary could theoretically achieve even higher recovery rates by heavily optimizing the suffix generation process. For instance, our template-free and blind construction settings uses a single universal suffix. A stronger adversary could use a multi-stage strategy, e.g., one suffix to recover pre-edit information and another to recover the new edit, potentially achieving higher recovery rates.

\paragraph{Broader Impact and Ethical Concerns.}
A core contribution of this work is the demonstration that targeted unlearning algorithms fail to securely erase memorized knowledge, inadvertently leaving data highly recoverable via our proposed attacks. This presents an inherent dual-use risk: our adversarial framework could theoretically be utilized by malicious actors to extract supposedly deleted PII, copyrighted material, or harmful knowledge from production models. 

However, we argue that relying on the illusion of erasure is fundamentally dangerous. Current editing methods provide a false sense of security for compliance with privacy regulations, while leaving the underlying data structurally intact. By exposing this vulnerability, our intent is strictly defensive. To mitigate immediate harm, all PII extraction experiments in this paper were conducted strictly on the synthetic Nemotron-PII benchmark, ensuring no real-world sensitive data was exposed or compromised during our evaluation.

\paragraph{Reproducibility Statement and Code.}
Our code is publicly available at \url{https://github.com/FloofCat/fragile-edits}. We provide complete implementation details, including all hyperparameters, compute settings and experimental data, in Appendix \ref{sec:training_setup} to facilitate full reproducibility.


\end{document}